\newcommand{\newref}[2][]{\hyperref[#2]{#1~\ref*{#2}}}
\renewcommand{\eqref}[1]{\hyperref[#1]{(\ref*{#1})}}
\theoremstyle{plain}
\newtheorem{theorem}{Theorem}[section]
\newtheorem{lemma}[theorem]{Lemma}
\newtheorem{conjecture}[theorem]{Conjecture}
\newtheorem{definition}[theorem]{Definition}
\theoremstyle{definition}
\DeclareMathOperator*{\pr}{\mathsf{Pr}} 
\DeclareMathOperator*{\ex}{\mathbb{E}}
\newcommand{\rgta}{\rightarrow}
\newcommand{\iprod}[2]{\langle #1, #2 \rangle}   
\newcommand{\rmn}{\mathbb{R}^{m \times n}}
\newcommand{\reals}{\mathbb{R}}
\newcommand{\note}[1]{\marginpar{\tiny *note in TeX*}}
\newcommand{\ignore}[1]{}
\renewcommand{\phi}{\varphi}
\DeclareMathOperator*{\argmin}{argmin}
\newcommand{\fro}[1]{\|#1\|_F}
\author{Raghu Meka \and Prateek Jain \and Inderjit S. Dhillon}
\title{Guaranteed Rank Minimization via Singular Value Projection\footnote{A shorter version of this paper was submitted to NIPS 2009 on June 5, 2009.}}
\begin{document}

\begin{titlepage}

\maketitle
\thispagestyle{empty}
\begin{abstract}
Minimizing the rank of a matrix subject to affine constraints is a
fundamental problem with many important applications in machine
learning and statistics. In this paper we propose a simple and fast
algorithm $\mathsf{SVP}$ (Singular Value Projection) for rank
minimization with affine constraints ($\mathsf{ARMP}$) and show that SVP recovers the minimum rank solution for affine constraints that satisfy the {\sl restricted isometry property}. We show robustness of our method to noise with a strong geometric convergence rate even for noisy measurements. Our results improve upon a recent breakthrough by Recht, Fazel and Parillo \cite{RechtFP2007} and Lee and Bresler \cite{LeeB2009} in three significant ways: 1) our method ($\mathsf{SVP}$) is significantly simpler to analyze and easier to implement, 2) we give recovery guarantees under strictly weaker isometry assumptions 3) we give geometric convergence guarantees for  $\mathsf{SVP}$ and, as demonstrated empirically, $\mathsf{SVP}$ is significantly faster on real-world and synthetic problems. In addition, we address the practically important problem of low-rank matrix completion, which can be seen as a special case of $\mathsf{ARMP}$. However, the affine constraints defining the matrix-completion problem do not obey the {\sl restricted isometry property} in general. We empirically demonstrate that our algorithm recovers low-rank {\sl incoherent} matrices from an almost optimal number of uniformly sampled entries. We make partial progress towards proving exact recovery and provide some intuition for the performance of $\mathsf{SVP}$ applied to matrix completion by showing a more restricted isometry property. Our algorithm outperforms existing methods, such as those of \cite{RechtFP2007,CandesR2008,CandesT2009,CaiCS2008,KeshavanOM2009}, for $\mathsf{ARMP}$ and the matrix-completion problem by an order of magnitude and is also significantly more robust to noise.
\end{abstract}
\end{titlepage}

\newcommand{\avd}{\mathsf{avD}}
\newcommand{\combmc}{\mathsf{COMBMC}}
\newcommand{\po}{\mathcal{P}_{\Omega}}
\newcommand{\eye}{\mathsf{I}}
\newcommand{\aff}{\mathcal{A}}
\newcommand{\affr}{\mathsf{ARMP}}
\newcommand{\raffr}{\mathsf{RARMP}}
\newcommand{\rip}{\mathsf{RIP}}
\newcommand{\ck}{\mathcal{C}(k)}
\newcommand{\pk}{\mathcal{P}_k}
\newcommand{\psvd}{\mathsf{SVP}}
\newcommand{\mcp}{\mathsf{MCP}}
\newcommand{\gks}{\mathsf{GraDeS}}
\newcommand{\up}{U_{\Pi}}

\section{Introduction}
In this paper we study the general affine rank minimization problem (ARMP),
\renewcommand{\theequation}{ARMP}
\begin{equation}\label{mainaffine}
  \min\; rank(X) \;\;\;\; s.t \;\;\;\; \aff(X) = b,\;\;\;\; X \in \rmn,\ b\in \mathbb{R}^d,
\end{equation}
\renewcommand{\theequation}{\arabic{equation}}
\addtocounter{equation}{-1}
where $\aff$  is an affine transformation from $\rmn$ to $\reals^d$. 

The general affine rank minimization problem is of considerable practical interest and many important machine learning problems such as matrix completion, low-dimensional metric embedding, low-rank kernel learning can be viewed as instances of the above problem. Unfortunately, ARMP is NP-hard in general and is also NP-hard to approximate (\cite{MekaJCD2008}). 

Until recently, most known methods for $\affr$ were heuristic in nature with few known rigorous guarantees. The most commonly used heuristic for the problem is to assume a factorization of $X$ and optimize the resulting non-convex problem by alternating minimization \cite{Brand2003,Koren2008,BellK2007}, alternative projections \cite{GrigoriadisB2000} or alternating LMIs \cite{SkeltonIG1997}. Another common approach is to relax the rank constraint to a convex function such as the trace-norm or the log determinant \cite{FazelHB2001}, \cite{FazelHB2003}. However, most of these methods do not have any optimality guarantees. Recently, Meka et al. \cite{MekaJCD2008} proposed online learning based methods for ARMP. However, their methods can only guarantee at best a logarithmic approximation for the minimum rank. 

In a recent breakthrough, Recht et al.~\cite{RechtFP2007} obtained the first nontrivial exact-recovery results for $\affr$ obtaining guaranteed rank minimization for affine transformations $\mathcal{A}$ that satisfy a {\sl restricted isometry property} ($\rip$). Define the isometry constant of $\aff$, $\delta_k$ to be the smallest number such that for all $X \in \rmn$ of rank at most $k$, 
\begin{equation}\label{rip}
 (1- \delta_k) \fro{X}^2 \leq \|\aff(X)\|_2^2 \leq (1+\delta_k) \fro{X}^2. 
\end{equation}

Recht et al.~show that for affine constraints with bounded isometry constants (specifically, $\delta_{5k} < 1/10$), finding the minimum trace-norm solution recovers the minimum rank solution. Their results were later extended to noisy measurements and isometry constants up to $\delta_{3k} < 1/4\sqrt{3}$ by Lee and Bresler \cite{LeeB2009b}. However, even the best existing optimization algorithms for the trace-norm relaxation are relatively inefficient in practice and their results are hard to analyze.

In another recent work, Lee and Bresler \cite{LeeB2009} obtained exact-recovery guarantees for $\affr$ satisfying $\rip$ using a different approach. Lee and Bresler propose an algorithm (ADMiRA) motivated by the {\sl orthogonal matching pursuit} line of work in compressed sensing, and show that for affine constraints with isometry constant $\delta_{4k} \leq 0.04$ their algorithm recovers the optimal solution. They also prove similar guarantees for noisy measurements and provide a geometric convergence rate for their algorithm. However, their method is not very efficient for large datasets and is hard to analyze. 

In this paper we propose a simple and fast algorithm $\psvd$ (Singular Value Projection) based on the classical projected gradient algorithm. We present a simple analysis showing that $\psvd$ recovers the minimum rank solution for affine constraints that satisfy $\rip$ even in the presence of noise and prove the following guarantees. Independent of our work, Goldfarb and Ma \cite{GoldfarbM2009} proposed an algorithm similar to our algorithm. However, their analysis and formulation is different from ours. In particular, their analysis builds on the analysis of Lee and Bresler and they require stronger isometry assumptions, $\delta_{3k} < 1/\sqrt{30}$, than we do. In addition, we make partial progress on analyzing $\psvd$ for the matrix completion problem and proving exact recovery.
\begin{theorem}\label{ripmain}
Suppose the isometry constant of $\aff$ satisfies $\delta_{2k} \leq 1/3$ and let $b = \aff(X^*)$ for a rank-$k$ matrix $X^*$. Then, $\psvd$ (Algorithm~\ref{alg:svp}) with step-size $\eta_t = 1/(1+\delta_{2k})$ converges to $X^*$. Furthermore, $\psvd$ outputs a matrix $X$ of rank at most $k$ such that $\|\aff(X)-b\|_2^2 \leq \epsilon$ in at most $\left\lceil \frac{1}{\log((1-\delta_{2k})/2\delta_{2k})} \log\frac{\|b\|^2}{2\epsilon}\right\rceil$ iterations.
\end{theorem}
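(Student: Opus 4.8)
The plan is to read \psvd\ as iterative hard thresholding / projected gradient descent on the squared residual $f(X) = \fro{\aff(X) - b}^2$: writing $\aff^*$ for the adjoint, the iterate is $X^{t+1} = \pk\!\left(X^t - \eta\,\aff^*(\aff(X^t) - b)\right)$, where $\pk$ returns a best rank-$k$ Frobenius approximation (the singular value projection) and $\eta = 1/(1+\delta_{2k})$. The whole theorem reduces to a single per-step contraction lemma, $f(X^{t+1}) \le \rho\, f(X^t)$ with $\rho = 2\delta_{2k}/(1-\delta_{2k})$. Since $\delta_{2k} \le 1/3$ forces $\rho \le 1$ (and $\delta_{2k} < 1/3$ forces $\rho < 1$), iterating this from $X^0 = 0$ gives $f(X^t) \le \rho^t \fro{b}^2$, and the stated bound on the number of iterations follows by taking logarithms, the denominator $\log((1-\delta_{2k})/(2\delta_{2k}))$ being exactly $\log(1/\rho)$. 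The output has rank at most $k$ automatically, since every iterate is an image of $\pk$.

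To prove the contraction I would combine the optimality of the projection with \rip. First, because $X^{t+1}$ is the Frobenius-closest rank-$k$ matrix to $Y^{t+1} := X^t - \eta\,\aff^*(\aff(X^t)-b)$, completing the square shows that $X^{t+1}$ in fact minimizes
\[
Q(X) := \tfrac{1}{\eta}\fro{X - X^t}^2 + 2\langle \aff(X^t) - b,\ \aff(X - X^t)\rangle
\]
over all matrices of rank at most $k$; in particular $Q(X^{t+1}) \le Q(X^*)$. Second, I would check that $Q$ majorizes the decrease of $f$: expanding $\aff(X) - b = (\aff(X^t) - b) + \aff(X - X^t)$ gives $f(X) - f(X^t) = 2\langle \aff(X^t)-b,\ \aff(X-X^t)\rangle + \fro{\aff(X-X^t)}^2$, and since $X - X^t$ has rank at most $2k$, \rip\ bounds the quadratic term by $(1+\delta_{2k})\fro{X-X^t}^2 = \tfrac{1}{\eta}\fro{X-X^t}^2$, so $f(X) - f(X^t) \le Q(X)$ for every rank-$k$ matrix $X$. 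Chaining these, $f(X^{t+1}) - f(X^t) \le Q(X^{t+1}) \le Q(X^*)$.

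It then remains to evaluate $Q(X^*)$. The cross term collapses because $\aff(X^* - X^t) = \aff(X^*) - \aff(X^t) = b - \aff(X^t)$, so $2\langle \aff(X^t)-b,\ \aff(X^*-X^t)\rangle = -2 f(X^t)$; and the quadratic term is controlled by the lower \rip\ bound $\fro{X^* - X^t}^2 \le f(X^t)/(1-\delta_{2k})$ (again $X^* - X^t$ has rank at most $2k$). Substituting $\eta = 1/(1+\delta_{2k})$ yields $Q(X^*) \le \big(\tfrac{1+\delta_{2k}}{1-\delta_{2k}} - 2\big) f(X^t) = \tfrac{3\delta_{2k}-1}{1-\delta_{2k}} f(X^t)$, and hence $f(X^{t+1}) \le \tfrac{2\delta_{2k}}{1-\delta_{2k}} f(X^t)$, which is the claimed rate.

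The step that needs the most care — and the reason only $\delta_{2k}$ rather than $\delta_{4k}$ appears — is to route every use of \rip\ through differences of the form $X - X^t$. Both $X^{t+1} - X^t$ and $X^* - X^t$ have rank at most $2k$, so each invocation of the isometry bound is legitimate with the constant $\delta_{2k}$. A more naive analysis that compared $X^{t+1}$ with $X^*$ directly via a cross term $\langle \aff(X^{t+1}-X^*),\ \aff(X^t - X^*)\rangle$ would pair two rank-$2k$ matrices and force the use of $\delta_{4k}$; exploiting the variational inequality $Q(X^{t+1}) \le Q(X^*)$ sidesteps this entirely. The other delicate point is the exact choice $\eta = 1/(1+\delta_{2k})$, which is precisely what makes the \rip\ upper bound turn $f - f(X^t)$ into the majorizer $Q$: a larger step would break the majorization, while a smaller one would only weaken the rate $\rho$.
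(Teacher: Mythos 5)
Your proposal is correct and follows essentially the same route as the paper: your majorizer $Q$ is (up to the paper's factor of $\tfrac12$ in $\psi$) exactly the paper's quadratic $f_t$, your observation that completing the square identifies $X^{t+1}=\pk(Y^{t+1})$ as the rank-$k$ minimizer of $Q$ so that $Q(X^{t+1})\le Q(X^*)$ is the paper's central step, and both uses of $\rip$ (on $X^{t+1}-X^t$ and on $X^*-X^t$, each of rank at most $2k$) yield the identical contraction factor $2\delta_{2k}/(1-\delta_{2k})$. Your closing remarks on why only $\delta_{2k}$ is needed and why $\eta_t = 1/(1+\delta_{2k})$ is the right step-size are accurate commentary rather than a deviation, and the small factor-of-$2$ slack between your iteration count and the stated one is inherited from the paper's own normalization of $\psi$.
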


\begin{theorem}[Main]\label{ripnoise}
Suppose the isometry constant of $\aff$ satisfies $\delta_{2k} \leq 1/3$ and let $b = \aff(X^*) + e$ for a rank $k$ matrix $X^*$ and an error vector $e \in \reals^d$. Then, $\psvd$ with step-size $\eta_t = 1/(1+\delta_{2k})$ outputs a matrix $X$ of rank at most $k$ such that $\|\aff(X)-b\|_2^2 \leq (C^2+\epsilon)\frac{\|e\|^2}{2}$, $\epsilon\geq 0$, in at most $\left\lceil \frac{1}{\log(1/D)} \log\frac{\|b\|^2}{(C^2+\epsilon)\|e\|^2}\right\rceil$ iterations for universal constants $C,D$.
\end{theorem}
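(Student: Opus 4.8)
The plan is to run the energy-decrease argument behind the noiseless Theorem~\ref{ripmain}, but to carry the now-nonzero optimal residual $\psi(X^*) = \tfrac{1}{2}\|e\|_2^2$ through every step. Throughout write $\psi(X) = \tfrac{1}{2}\|\aff(X)-b\|_2^2$ for the objective that $\psvd$ decreases, $g = \nabla\psi(X_t)$ for its gradient at the current iterate, and recall the update $Y_{t+1} = X_t - \eta g$, $X_{t+1} = \pk(Y_{t+1})$ with $\eta = 1/(1+\delta_{2k})$.

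First I would build a quadratic majorizer of $\psi$ and exploit the optimality of the projection. Since $\psi$ is quadratic, $\psi(W) = \psi(X_t) + \la g, W-X_t\ra + \tfrac{1}{2}\|\aff(W-X_t)\|_2^2$ exactly, and when $W$ has rank at most $k$ the matrix $W - X_t$ has rank at most $2k$, so the upper $\rip$ bound gives $\psi(W) \le Q(W) := \psi(X_t) + \la g, W-X_t\ra + \tfrac{1+\delta_{2k}}{2}\fro{W-X_t}^2$. Completing the square turns this into $Q(W) = \psi(X_t) - \tfrac{\eta}{2}\fro{g}^2 + \tfrac{1}{2\eta}\fro{W - Y_{t+1}}^2$, whose only $W$-dependence is the last term; hence the best rank-$k$ approximation $X_{t+1} = \pk(Y_{t+1})$ minimizes $Q$ over all rank-$k$ matrices, and in particular $\psi(X_{t+1}) \le Q(X_{t+1}) \le Q(X^*)$.

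Next I would expand $Q(X^*)$ and let the noise enter. Substituting the exact quadratic identity at $W = X^*$ to eliminate $\la g, X^*-X_t\ra$ gives $Q(X^*) = \psi(X^*) - \tfrac{1}{2}\|\aff(X^*-X_t)\|_2^2 + \tfrac{1}{2\eta}\fro{X^*-X_t}^2$, and applying the lower $\rip$ bound to $X^*-X_t$ (which has rank at most $2k$, using $\tfrac{1}{2\eta} - \tfrac{1-\delta_{2k}}{2} = \delta_{2k}$) collapses this to the per-step inequality $\psi(X_{t+1}) \le \psi(X^*) + \delta_{2k}\fro{X^*-X_t}^2$. This is exactly the noiseless estimate except that $\psi(X^*) = \tfrac{1}{2}\|e\|_2^2$ no longer vanishes. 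Converting $\fro{X^*-X_t}^2$ back to a residual through the lower $\rip$ bound and the triangle inequality $\|\aff(X^*-X_t)\|_2 \le \|\aff(X_t)-b\|_2 + \|e\|_2$ yields the scalar recursion $\|\aff(X_{t+1})-b\|_2^2 \le \|e\|_2^2 + \tfrac{2\delta_{2k}}{1-\delta_{2k}}\big(\|\aff(X_t)-b\|_2 + \|e\|_2\big)^2$, which at $e = 0$ recovers the geometric rate $\tfrac{2\delta_{2k}}{1-\delta_{2k}}$ of Theorem~\ref{ripmain}.

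Finally I would read this recursion as contraction toward a noise floor. While the residual dominates the noise, $\|\aff(X_t)-b\|_2 \ge C\|e\|_2$, a weighted AM--GM split of the square bounds the per-step factor by $\tfrac{1}{C^2} + \tfrac{2\delta_{2k}}{1-\delta_{2k}}\big(1 + \tfrac{1}{C}\big)^2$; since $\delta_{2k} < 1/3$ makes $\tfrac{2\delta_{2k}}{1-\delta_{2k}} < 1$, this is a universal constant $D < 1$ once $C$ is a large enough absolute constant, so $\|\aff(X_{t+1})-b\|_2^2 \le D\|\aff(X_t)-b\|_2^2$ throughout that phase. Unrolling from the initialization $\|\aff(X_0)-b\|_2^2 = \|b\|_2^2$ until the residual reaches the floor $(C^2+\epsilon)\tfrac{\|e\|_2^2}{2}$ gives both the terminal bound and the stated iteration count $\tfrac{1}{\log(1/D)}\log\tfrac{\|b\|_2^2}{(C^2+\epsilon)\|e\|_2^2}$. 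The crux, and the main obstacle, is precisely this last step: the contraction factor falls below $1$ only while the residual strictly dominates $\|e\|_2$, and it degrades as $\delta_{2k}\uparrow 1/3$ (where the noiseless rate itself tends to $1$), so extracting genuinely absolute constants $C$ and $D$ forces a careful accounting of the cross term $\|\aff(X_t)-b\|_2\,\|e\|_2$ and keeping $\delta_{2k}$ away from the threshold.
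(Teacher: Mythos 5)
Your proposal is correct and follows essentially the same route as the paper: you re-derive the per-step bound of Lemma~\ref{lem:lem1} via the same quadratic majorizer and projection-optimality argument, and then contract while $\|\aff(X^t)-b\|_2 \geq C\|e\|_2$ with exactly the paper's factor $D = \frac{1}{C^2} + \frac{2\delta_{2k}}{1-\delta_{2k}}\left(1+\frac{1}{C}\right)^2$ before unrolling from $\psi(X^0)=\|b\|^2/2$. The only cosmetic difference is that you bound $\|b-\aff(X^t)-e\|_2$ by the triangle inequality where the paper expands the square and applies Cauchy--Schwarz to the cross term $e^T(b-\aff(X^t))$ --- these yield the identical estimate --- and your closing remark that $C$ genuinely depends on $\delta_{2k}$ staying away from $1/3$ (the paper's choice is $C>(1+\delta_{2k})/(1-3\delta_{2k})$, so ``universal'' is a slight overstatement at the boundary) is a fair observation the paper elides.
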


Our analysis of $\psvd$ is motivated by the recent work in the field of compressed sensing by Blumensath and Davies \cite{BlumensathD2009}, Garg and Khandekar \cite{GargK2009}. Our results improve the results of Recht et al.~and Lee and Bresler as follows.
\begin{enumerate}
\item $\psvd$ is considerably simpler to analyze than the methods of Recht et al.~and Lee and Bresler. Further, we need weaker isometry assumptions on $\aff$: we only require $\delta_{2k} < 1/3$ as opposed to $\delta_{5k} < 1/10$ required by Recht et al., $\delta_{3k} < 1/4\sqrt{3}$ required by Lee and Bresler \cite{LeeB2009b} and $\delta_{4k} \leq 0.04$ required by Lee and Bresler \cite{LeeB2009}.
\item $\psvd$ has a strong geometric convergence rate and is faster than using the best trace-norm optimization algorithms and the methods of Lee and Bresler by an order of magnitude.
\end{enumerate}

Although restricted isometry property is natural in settings where the affine constraints contain information about all the entries of the unknown matrix, in several cases of considerable practical interest the affine constraints only contain {\sl local information} and may not satisfy $\rip$ directly. 

One such important problem where $\rip$ does not hold directly is the low-rank matrix completion problem. In the matrix completion problem we are given the entries of an unknown low-rank matrix $X^*$ for ordered pairs $(i,j) \in \Omega \subseteq [m]\times [n]$ and the goal is to complete the missing entries of $X^*$. A highly popular application of the matrix completion problem is in the field of collaborative filtering, where typically the task is to predict user ratings given past ratings of the users. Recently, a lot of attention has been given to the problem  due to the Netflix Challenge \cite{Netflix}. Other applications of matrix completion include triangulation from incomplete data, link prediction in social networks etc.

Similar to $\affr$, the low-rank matrix completion is also NP-hard in general and most methods are heuristic in nature with no theoretical guarantees. The alternating least squares minimization heuristic and its variants \cite{Koren2008,BellK2007} perform the best in practice but are notoriously hard to analyze. 

Recently, Candes and Recht \cite{CandesR2008}, Candes and Tao \cite{CandesT2009} and Keshavan et al.~\cite{KeshavanOM2009} obtained the first non-trivial results for low-rank matrix completion under a few additional assumptions. Broadly, these papers give exact-recovery guarantees when the optimal solution $X^*$ is $\mu$-{\sl incoherent} (see Definition \ref{incoherence}), and the entries $\Omega$ are chosen uniformly at random with $|\Omega| \geq C(\mu,k)\, n\, poly\log n$, where $C(\mu,k)$ depends only on $\mu,k$. However, the algorithms of the above papers, even when using methods tailored specifically for matrix-completion such as those of Cai et al.~\cite{CaiCS2008}, are quite expensive in practice and not very tolerant to noise. 

As low-rank matrix completion is a special case of $\affr$, we can naturally adapt our algorithm $\psvd$ for matrix completion. We demonstrate empirically that for a suitable step-size, $\psvd$ significantly outperforms the methods of \cite{CandesR2008}, \cite{CandesT2009}, \cite{CaiCS2008}, \cite{KeshavanOM2009} in accuracy, computational time and tolerance to noise. Furthermore, our experiments strongly suggest (see Figure~\ref{fig:matcomp1}) that guarantees similar to those of \cite{CandesT2009}, \cite{KeshavanOM2009} hold for $\psvd$, achieving exact recovery for incoherent matrices from an almost optimal number of entries\footnote{It follows from a coupon collector argument that exact-recovery from random samples requires $nk\log n$ samples.}.

Although we do not provide a rigorous proof of exact-recovery for $\psvd$ applied to matrix completion, we make partial progress in this direction and give strong intuition for the performance of $\psvd$. We prove that though the affine constraints defining the matrix-completion problems do not obey the restricted isometry property, they obey the restricted isometry property over incoherent matrices. This weaker $\rip$ condition along with a hypothesis bounding the incoherence of the iterates of $\psvd$ imply exact-recovery of a low-rank incoherent matrix from an almost optimal number of entries. We also provide strong empirical evidence supporting our hypothesis bounding the incoherence of the iterates of $\psvd$ (see Figure \ref{fig:conjecture}).

\begin{figure}[t] 
\begin{center}
      \includegraphics[width=0.45\textwidth, height=4cm]{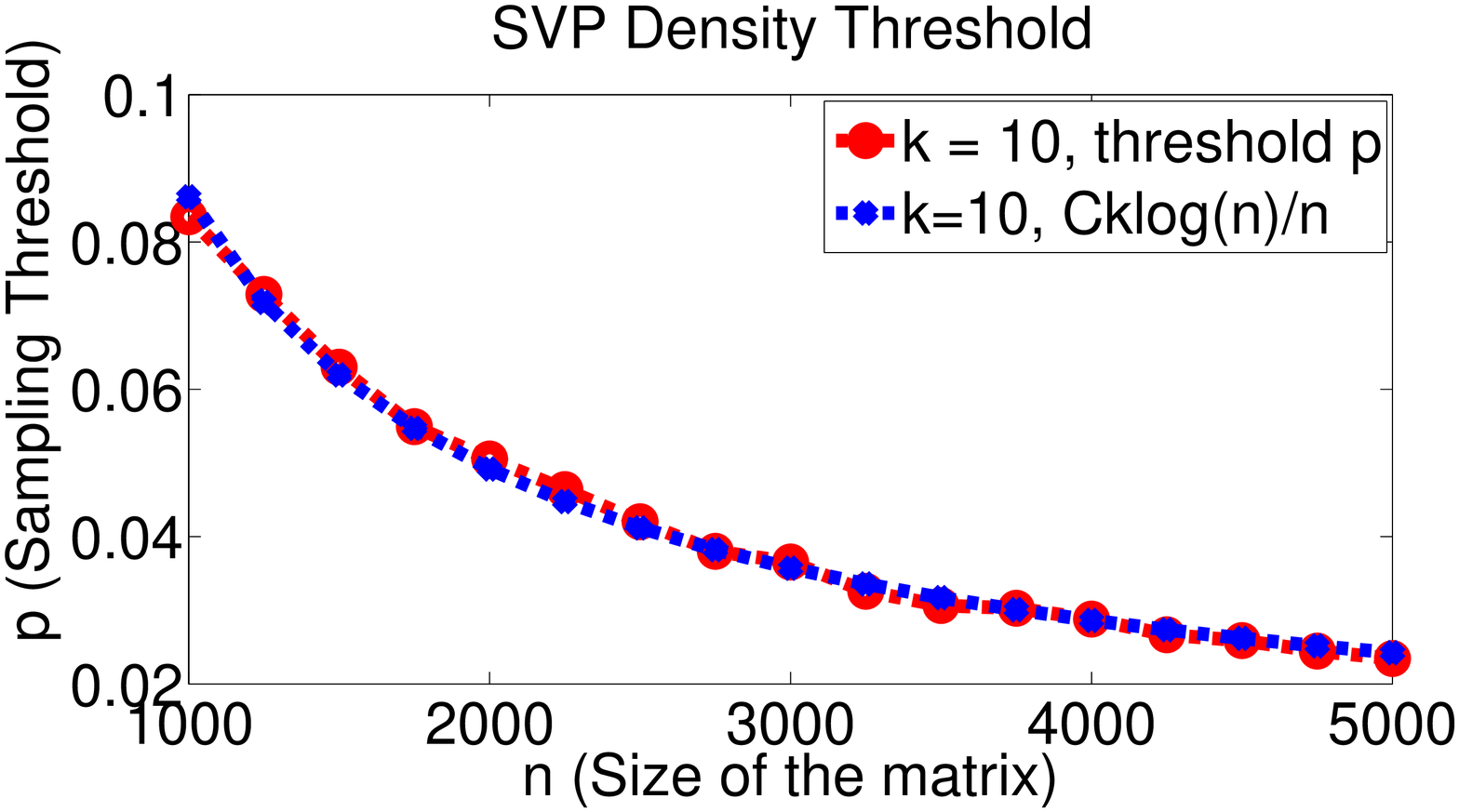}
\caption{Empirical estimate of the sampling density threshold ($p=|\Omega|/mn$) for exact matrix completion using $\mathsf{SVP}$. Note that the threshold scales as $C k \log n/n$ (with $C = 1.28$) almost matching the $k \log n/n$ lowerbound.}
\label{fig:matcomp1}
 \end{center}
\end{figure}

We first present our algorithm $\psvd$ in Section~\ref{sec:svp} and present its analysis for affine constraints satisfying $\rip$ in Section~\ref{sec:analysis}. In Section~\ref{sec:matcomp}, we specialize our algorithm $\psvd$ to the task of low-rank matrix completion and prove a more restricted isometry property for the matrix completion problem. In Section~\ref{sec:results}, we give empirical results for $\mathsf{SVP}$ applied to $\affr$ and matrix-completion on real-world and synthetic problems.

\section{Singular Value Projection (SVP)}
\label{sec:svp}
Consider the following robust formulation of $\affr$ ($\raffr$),
\renewcommand{\theequation}{RARMP}
\begin{equation}\label{robaffine}
  \min_X \;\psi(X) = \frac{1}{2}\|\aff(X) - b\|_2^2 \;\;\; s.t \;\;\; X \in \ck = \{X: rank(X) \leq k\}  .
\end{equation}
\renewcommand{\theequation}{\arabic{equation}}
\addtocounter{equation}{-1}

The hardness of the above problem mainly comes from the non-convexity of the set of low-rank matrices $\ck$. However, in spite of the hardness of the rank constraint, the Euclidean projection onto the non-convex set  $\ck$ can be computed efficiently using singular value decomposition. Our algorithm uses this observation along with the projected gradient method for efficiently minimizing the objective function specified in problem \eqref{robaffine}.

Let $\pk:\rmn \rgta \rmn$ denote the orthogonal projection on to the set $\ck$. That is, $\pk(X) = \argmin_Y \{\fro{Y-X} : Y \in \ck\}$. It is well known that $\pk(X)$ can be computed efficiently by computing the top $k$ singular values and vectors of $X$.

In $\psvd$, a candidate solution to $\affr$ is computed iteratively by starting from the all-zero matrix and adapting the classical projected gradient descent update as follows (Observe that $\nabla \psi(X) = \aff^T(\aff(X) - b)$) :
\begin{equation}\label{mainupdate}
  X^{t+1} \leftarrow \pk\,\left(\, X^t - \eta_t \nabla \psi(X^t)\,\right)=\pk\,\left(\, X^t - \eta_t \aff^T(\aff(X^t) - b)\,\right).
\end{equation}
Algorithm~\ref{alg:svp} presents our $\psvd$ algorithm. Note that the iterates $X^t$ are always low-rank, facilitating faster computation of the SVD. See Section~\ref{sec:compissues} for a more detailed discussion of  the computational issues.
\begin{algorithm}[t]
\caption{Singular Value Projection ($\psvd$) Algorithm}
\label{alg:svp}
  \begin{algorithmic}[1]
    \REQUIRE $\aff, b, \text{tolerance } \varepsilon$, $\eta_t$ for $t=0,1,2,\dots$
    \STATE {\bf Initialize:} $X^0=0$ and $t=0$
    \REPEAT
    \STATE $Y^{t+1}\leftarrow X^t - \eta_t \aff^T(\aff(X^t) - b)$
    \STATE Compute top $k$ singular vectors of $Y^{t+1}$: $U_k$, $\Sigma_k$, $V_k$
    \STATE $X^{t+1}\leftarrow U_k\Sigma_kV_k^T$
    \STATE $t \leftarrow t+1$
    \UNTIL {$\|\aff(X^{t+1})-b\|_2^2\leq \varepsilon$}
  \end{algorithmic}
\end{algorithm}

\section{Analysis for Affine Constraints Satisfying $\rip$}
\label{sec:analysis}
We now show that $\psvd$ solves exact rank minimization for affine constraints that satisfy $\rip$ and prove our main results, Theorems \ref{ripmain} and \ref{ripnoise}. 
We first present a lemma that bounds the error at the $(t+1)$-st iteration ($\psi(X^{t+1})$) with respect to the error incurred by the optimal solution ($\psi(X^*)$) and the $t$-th iteration.
\begin{lemma}
\label{lem:lem1}
Let $X^*$ be an optimal solution of \eqref{robaffine} and let $X^t$ be the iterate obtained by SVP algorithm at $t$-th iteration. Then, 
$$\psi(X^{t+1})\leq \psi(X^*)  + \frac{\delta_{2k}}{(1-\delta_{2k})}\|\aff(X^* -X^t)\|_2^2 ,$$
where $\delta_{2k}$ is the rank $2k$ isometry constant of $\aff$. 
\end{lemma}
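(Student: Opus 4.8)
The plan is to combine two structural facts: the Euclidean optimality of the singular value projection $\pk$, and the fact that $\psi$ is an \emph{exact} quadratic, so that its second-order Taylor expansion about any point is an identity rather than an inequality. Throughout I would abbreviate $g = \nabla\psi(X^t) = \aff^T(\aff(X^t)-b)$ and $Y^{t+1} = X^t - \eta_t g$, so that $X^{t+1} = \pk(Y^{t+1})$ is the best rank-$k$ approximation of $Y^{t+1}$.

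First I would extract a bound on the step length $\fro{X^{t+1}-X^t}^2$. Since $X^{t+1}$ is the closest rank-$k$ matrix to $Y^{t+1}$ and $X^*$ is itself of rank at most $k$, the projection cannot be farther from $Y^{t+1}$ than $X^*$ is, giving $\fro{X^{t+1}-Y^{t+1}}^2 \le \fro{X^*-Y^{t+1}}^2$. Expanding both squares, the common term $\eta_t^2\fro{g}^2$ cancels, and rearranging yields
$$\fro{X^{t+1}-X^t}^2 \le \fro{X^*-X^t}^2 + 2\eta_t\,\iprod{g}{X^*-X^{t+1}}.$$

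Next I would expand $\psi(X^{t+1})$ exactly about $X^t$: because $\psi$ is quadratic, $\psi(X^{t+1}) = \psi(X^t) + \iprod{g}{X^{t+1}-X^t} + \tfrac12\|\aff(X^{t+1}-X^t)\|_2^2$. Since $X^{t+1}-X^t$ is a difference of two rank-$\le k$ matrices and hence has rank at most $2k$, the $\rip$ upper bound gives $\|\aff(X^{t+1}-X^t)\|_2^2 \le (1+\delta_{2k})\fro{X^{t+1}-X^t}^2$. Substituting the step-length bound from the previous paragraph and using the prescribed step size $\eta_t = 1/(1+\delta_{2k})$, the factor $(1+\delta_{2k})$ exactly cancels the $2\eta_t$, so the two inner products telescope via $\iprod{g}{X^{t+1}-X^t}+\iprod{g}{X^*-X^{t+1}}=\iprod{g}{X^*-X^t}$, leaving
$$\psi(X^{t+1}) \le \psi(X^t) + \iprod{g}{X^*-X^t} + \tfrac{1+\delta_{2k}}{2}\fro{X^*-X^t}^2.$$

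Finally I would apply the same quadratic identity in reverse, now to $X^*$, namely $\psi(X^*) = \psi(X^t)+\iprod{g}{X^*-X^t}+\tfrac12\|\aff(X^*-X^t)\|_2^2$, which rewrites $\psi(X^t)+\iprod{g}{X^*-X^t}$ as $\psi(X^*)-\tfrac12\|\aff(X^*-X^t)\|_2^2$. Since $X^*-X^t$ also has rank at most $2k$, the $\rip$ lower bound gives $\fro{X^*-X^t}^2 \le \|\aff(X^*-X^t)\|_2^2/(1-\delta_{2k})$; substituting and collecting the coefficient of $\|\aff(X^*-X^t)\|_2^2$ produces $-\tfrac12 + \tfrac{1+\delta_{2k}}{2(1-\delta_{2k})} = \tfrac{\delta_{2k}}{1-\delta_{2k}}$, which is precisely the claimed bound. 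The only point requiring care is the bookkeeping: checking that every matrix on which I invoke $\rip$—namely $X^{t+1}-X^t$ and $X^*-X^t$—indeed has rank at most $2k$, and verifying that $\eta_t=1/(1+\delta_{2k})$ is exactly the choice that makes the cross terms cancel. There is no genuine analytic obstacle; the real content is playing $\pk$'s optimality against the competitor $X^*$ and then squeezing the quadratic remainder between the two sides of $\rip$.
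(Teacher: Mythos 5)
Your proof is correct and is essentially the paper's own argument: both rest on the exact quadratic expansion of $\psi$, the $\rip$ upper bound on the rank-$2k$ difference $X^{t+1}-X^t$, the optimality of $\pk(Y^{t+1})$ played against the competitor $X^*$ with the step size $\eta_t = 1/(1+\delta_{2k})$, and the $\rip$ lower bound on $X^*-X^t$. Your direct expansion of $\fro{X^{t+1}-Y^{t+1}}^2 \le \fro{X^*-Y^{t+1}}^2$ is just the paper's completing-the-square step (via the auxiliary function $f_t$ and the inequality $f_t(X^{t+1})\le f_t(X^*)$) run in reverse, so the two write-ups are algebraically identical.
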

\begin{proof}
Recall that $\psi(X)=\frac{1}{2}\|\aff(X)-b\|_2^2$. Since $\psi(\cdot)$ is a quadratic function, we have
\begin{align}\label{}
  \psi(X^{t+1}) - \psi(X^t) &=  \iprod{\nabla \psi(X^t)}{X^{t+1}-X^t} + \frac{1}{2} \|\aff(X^{t+1}-X^t)\|_2^2 \nonumber\\
&\leq \iprod{\aff^T(\aff(X^t) - b)}{X^{t+1}-X^t} + \frac{1}{2}\cdot (1+\delta_{2k}) \cdot \fro{X^{t+1}-X^t}^2, \label{pf1eq1}
\end{align}
where inequality \eqref{pf1eq1} follows from $\rip$ applied to the matrix $X^{t+1}-X^t$ of rank at most $2k$. Let $Y^{t+1} = X^t - \frac{1}{1+\delta_{2k}}\aff^T(\aff(X^t)-b)$ and 
\[ f_t(X) = \iprod{\aff^T(\aff(X^t) - b)}{X-X^t} + \frac{1}{2}\cdot (1+\delta_{2k}) \cdot \fro{X -X^t}^2. \]
\\Then,
\begin{align*}\label{}
 f_t(X) &= \frac{1}{2} (1+\delta_{2k})\left[ \fro{X-X^t}^2 + 2 \left\langle \frac{\aff^T(\aff(X^t) - b)}{1+\delta_{2k}}, X-X^t\right\rangle\right]\\
&= \frac{1}{2}(1+\delta_{2k}) \fro{X - Y^{t+1}}^2 - \frac{1}{2(1+\delta_{2k})} \cdot \fro{\aff^T(\aff(X^t)-b)}^2.
\end{align*}

Now, by definition, $\pk(Y^{t+1})=X^{t+1} $ is the minimizer of $f_t(X)$ over all matrices $X \in \ck$ (of rank at most $k$). In particular, $f_t(X^{t+1}) \leq f_t(X^*)$. Thus,
\begin{align}\label{}
  \psi(X^{t+1}) - \psi(X^t) &\leq f_t(X^{t+1}) \leq f_t(X^*)=
 \iprod{\aff^T(\aff(X^t) - b)}{X^*-X^t} + \frac{1}{2} (1+\delta_{2k}) \fro{X^* -X^t}^2 \nonumber\\
&\leq \iprod{\aff^T(\aff(X^t) - b)}{X^*-X^t} + \frac{1}{2}\cdot \frac{1+\delta_{2k}}{1-\delta_{2k}} \|\aff(X^* -X^t)\|_2^2 \label{pf1eq2}\\
&= \psi(X^*) - \psi(X^t) + \frac{\delta_{2k}}{(1-\delta_{2k})}\|\aff(X^* -X^t)\|_2^2  \nonumber,
\end{align}
where inequality \eqref{pf1eq2} follows from $\rip$ applied to $X^*-X^t$.
\end{proof}
We now prove that $\psvd$ obtains the optimal solution for ARMP with restricted isometry property.  
\begin{proof}[Proof of Theorem \ref{ripmain}]
Using Lemma~\ref{lem:lem1} and the fact that $\psi(X^*) = 0$ for the noise-less case, 
\[ \psi(X^{t+1}) \leq \frac{\delta_{2k}}{(1-\delta_{2k})} \|\aff(X^* -X^t)\|_2^2=\frac{2\delta_{2k}}{(1-\delta_{2k})}\psi(X^t).\]
Also, note that for $\delta_{2k}< 1/3$, $\frac{2\delta_{2k}}{(1-\delta_{2k})}< 1$. Hence, $\psi(X^\tau)\leq \epsilon$ where $\tau=\left\lceil \frac{1}{\log((1-\delta_{2k})/2\delta_{2k})} \log\frac{\psi(X^0)}{\epsilon}\right\rceil$. Now, the SVP algorithm is initialized using $X^0=0$, i.e., $\psi(X^0)=\frac{\|b\|^2}{2}$. Hence, $\tau=\left\lceil \frac{1}{\log((1-\delta_{2k})/2\delta_{2k})} \log\frac{\|b\|^2}{2\epsilon}\right\rceil$. 
\end{proof}

Next, we prove the noisy version of Theorem \ref{ripmain}. 
\begin{proof}[Proof of Theorem \ref{ripnoise}]
Let the current solution $X^t$  satisfy $\psi(X^t)\geq C^2\|e\|^2/2$, where $C\geq 0$ is a universal constant. Using Lemma~\ref{lem:lem1} and the fact that $b-\aff(X^*)=e$,
\begin{align*}
\psi(X^{t+1})&\leq \frac{\|e\|_2^2}{2}+\frac{\delta_{2k}}{(1-\delta_{2k})}\|b-\aff(X^t)-e\|_2^2,\\
&\leq \frac{\|e\|_2^2}{2}+\frac{2\delta_{2k}}{(1-\delta_{2k})}\left(\psi(X^t)-e^T(b-\aff(X^t))+\frac{\|e\|^2}{2}\right),\\
&\leq \frac{\psi(X^t)}{C^2}+\frac{2\delta_{2k}}{(1-\delta_{2k})}\left(\psi(X^t)+\frac{2}{C}\psi(X^t)+\frac{1}{C^2}\psi(X^t)\right),\\
&\leq \left(\frac{1}{C^2}+\frac{2\delta_{2k}}{(1-\delta_{2k})}\left(1+\frac{1}{C}\right)^2\right)\psi(X^t)\\
&=D\psi(X^t),
\end{align*}
where $D=\left(\frac{1}{C^2}+\frac{2\delta_{2k}}{(1-\delta_{2k})}\left(1+\frac{1}{C}\right)^2\right)$. Recall that $\delta_{2k}<1/3$. Hence, selecting $C>(1+\delta_{2k})/(1-3\delta_{2k})$, we get $D<1$. Also, $\psi(X^0)=\psi(0)=\|b\|^2/2$. Hence, $\psi(X^\tau)\leq (C^2+\epsilon)\|e\|^2/2$ where $\tau=\left\lceil \frac{1}{\log(1/D)} \log\frac{\|b\|^2}{(C^2+\epsilon)\|e\|^2}\right\rceil$.
\end{proof}

\section{Matrix Completion}\label{sec:matcomp}
We first describe the low-rank matrix completion problem formally. Let $\po:\rmn \rgta \rmn$ denote the projection onto the index set $\Omega$. That is, $(\po(X))_{ij} = X_{ij}$ for $(i,j) \in \Omega$ and $(\po(X))_{ij} = 0$ otherwise. 
Then, the low-rank matrix completion problem ($\mcp$) can be formulated as follows,
\renewcommand{\theequation}{MCP}
\begin{equation}\label{mcrmp}
  \min_X\ \text{rank}(X) \;\;\;\; s.t \;\;\;\; \po(X) = \po(X^*),\ X\in \mathbb{R}^{m\times n}.
\end{equation}
\renewcommand{\theequation}{\arabic{equation}}
\addtocounter{equation}{-1}

Observe that the matrix completion problem is a special case of $\affr$. However, the affine constraints that define $\mcp$, $\po$, do not satisfy $\rip$ in general. Thus Theorems \ref{ripmain}, \ref{ripnoise} above and the results of Recht et al.~\cite{RechtFP2007} do not directly apply to $\mcp$. 
The first non-trivial results for $\mcp$ were obtained recently by Candes and Recht \cite{CandesR2008}, Keshavan et al.~\cite{KeshavanOM2009} and Candes and Tao \cite{CandesT2009}. These works show exact recovery of the unknown matrix $X^*$ when the observed entries are sampled uniformly and $X^*$ is {\sl incoherent} in the sense defined below.

\begin{definition}[Incoherence]\label{incoherence}
A matrix $X \in \rmn$ with singular value decomposition $X = U\Sigma V^T$ is $\mu$-incoherent if 
\[ \max_{i,j} |U_{ij}| \leq \frac{\sqrt{\mu}}{\sqrt{m}},\;\;\; \max_{i,j} |V_{ij}| \leq  \frac{\sqrt{\mu}}{\sqrt{n}} .\]
\end{definition}
Intuitively, high incoherence (i.e., $\mu$ is small) implies that the non-zero entries of $X$ are not concentrated in a small number of entries. Hence, a random sampling of the matrix should provide enough information to reconstruct the entire matrix.

As matrix completion is a special case of $\affr$, we can apply $\psvd$ for matrix completion. We apply $\psvd$ to matrix-completion with step-size $\eta_t = 1/(1+\delta)p$, where $p$ is the density of sampled entries and $0 < \delta < 1/3$ is a parameter depending on how large $p$ is, leading to the update
\begin{equation}\label{mcpupdate}
  X^{t+1} \leftarrow \pk\,\left(\, X^t - \frac{1}{(1+\delta)p} (\po(X^t) - \po(X^*))\,\right).
\end{equation}

We now provide some intuition for our choice of step-size $\eta_t$ and make partial progress towards proving that $\psvd$ achieves exact recovery for low-rank incoherent matrices. We show that though the affine constraints defining $\mcp$, $\po$, do not satisfy $\rip$ for all low-rank matrices, they satisfy $\rip$ for all low-rank incoherent matrices. Thus, if the iterates appearing in $\psvd$ remain incoherent throughout the execution of the algorithm, then Theorem \ref{ripmain} would imply recovery of the unknown entries of the matrix. Empirical evidence strongly supports our hypothesis that the incoherence of the iterates arising in $\psvd$ remains bounded.

Figure~\ref{fig:matcomp1} plots the threshold sampling density $p$ beyond which matrix completion for randomly generated matrices is solved exactly by $\psvd$ for fixed $k$ and varying matrix sizes $n$. Note that the density threshold matches the optimal bound of $O(k\log n/n)$ with the constant being $C=1.28$. Figure \ref{fig:conjecture} plots the maximum incoherence $\max_t \mu(X^t) = \sqrt{n}\,\max_{t,i,j}|U^t_{ij}|$, where $U^t$ are the left singular vectors of the intermediate iterates $X^t$ computed by $\psvd$. The figure clearly shows that the incoherence $\mu(X^t)$ of the iterates is bounded by a constant independent of the matrix size $n$ and density $p$ throughout the execution of $\psvd$.

 Fix an incoherent matrix $X\in\mathbb{R}^{m\times n}$ of rank at most $k$ and let $\Omega$ be sampled according to the {\sl Bernoulli model} with each $(i,j) \in \Omega$ independently with probability $p$. Then, $E[\fro{\po(X)}^2] = p \fro{X}^2$. Further, by Chernoff bounds, for $\delta > 0$, $p \geq C k^2\log n/m$ for a universal constant $C$, with high probability 
\begin{equation}\label{porip}
 (1 - \delta) p\, \fro{X}^2 \;\leq\; \fro{\po(X)}^2 \; \leq\; (1+\delta) p\, \fro{X}^2 .  
\end{equation}

Combining the above Chernoff bound estimate with a union bound over low-rank incoherent matrices, we obtain the following restricted isometry property for the projection operator $\po$ restricted to low-rank incoherent matrices. See Section \ref{sec:weakrip} for a detailed proof.
\begin{theorem}\label{mcrip}
There exists a constant $C\geq 0$ such that the following holds for all $0 < \delta < 1$, $\mu \geq 1$, $n\geq m\geq 3$: For $\Omega \subseteq [m] \times [n]$ chosen according to the Bernoulli model with density $p \geq C \mu^2 k^2 \log n/\delta^2 m$, with probability at least $1-\exp(-n \log n)$, the restricted isometry property in \eqref{porip} holds for all $\mu$-incoherent matrices $X$ of rank at most $k$. 
\end{theorem}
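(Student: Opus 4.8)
The plan is to establish Theorem~\ref{mcrip} by the standard recipe of \emph{pointwise concentration}, \emph{union bound over a net}, and \emph{Lipschitz extension}, the last step exploiting that $\po$ merely zeroes out coordinates and is therefore a contraction on $(\rmn,\fro{\cdot})$. Since \eqref{porip} is invariant under scaling of $X$, it suffices to prove it simultaneously for every $\mu$-incoherent matrix of rank at most $k$ with $\fro{X}=1$; call this set $\mathcal S$.

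First I would record the only consequence of incoherence that the argument uses: an entrywise bound. Writing $X=U\Sigma V^T$ with $\Sigma=\mathrm{diag}(\sigma_1,\dots,\sigma_k)$ and combining Definition~\ref{incoherence} with $\littlesum_l\sigma_l\le\sqrt{k}\,\fro{X}$ (Cauchy--Schwarz),
\[
 |X_{ij}| = \Bigl|\littlesum_{l}U_{il}\sigma_lV_{jl}\Bigr| \le \frac{\mu}{\sqrt{mn}}\littlesum_l\sigma_l \le \frac{\mu\sqrt{k}}{\sqrt{mn}}\,\fro{X},
\]
so every $X\in\mathcal S$ satisfies $\max_{ij}X_{ij}^2\le M:=\mu^2k/(mn)$. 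Then, for a \emph{fixed} $X\in\mathcal S$, I would view $\fro{\po(X)}^2=\littlesum_{ij}\ind{(i,j)\in\Omega}X_{ij}^2$ as a sum of independent nonnegative terms bounded by $M$, with mean $p$ and variance at most $pM$. Bernstein's inequality with deviation $t=\delta p$ then yields
\[
 \pr\bigl[\,|\fro{\po(X)}^2-p|\ge\delta p\,\bigr] \le 2\exp\!\left(-\frac{\delta^2p^2/2}{pM+M\delta p/3}\right) \le 2\exp\!\left(-\frac{\delta^2p\,mn}{3\mu^2k}\right),
\]
using $\delta<1$, which is precisely the single-matrix form of \eqref{porip}.

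The crux is the uniformization. I would let $\mathcal N\subseteq\mathcal S$ be a maximal $\epsilon$-separated subset: then $\mathcal N$ is an $\epsilon$-net of $\mathcal S$ whose points are \emph{all} incoherent (so the estimate above applies verbatim at each of them), and its cardinality is bounded by the covering number of the unit-Frobenius rank-$\le k$ ball, $|\mathcal N|\le(C'/\epsilon)^{(m+n+1)k}$. Taking $\epsilon\asymp\delta\sqrt{p}$ forces $\log(1/\epsilon)=O(\log n)$ across the whole range $0<\delta<1$, so $\log|\mathcal N|=O((m+n)k\log n)=O(nk\log n)$. A union bound fails with probability at most $|\mathcal N|\cdot2\exp(-\delta^2p\,mn/(3\mu^2k))$; substituting $p\ge C\mu^2k^2\log n/(\delta^2m)$ makes the per-point exponent at least $(C/3)kn\log n$, which for a large enough universal constant $C$ dominates $\log|\mathcal N|$ and still leaves the required $\exp(-n\log n)$ slack. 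To pass from $\mathcal N$ to all of $\mathcal S$, for $X\in\mathcal S$ I pick $X_0\in\mathcal N$ with $\fro{X-X_0}\le\epsilon$ and use $\fro{\po(X)-\po(X_0)}=\fro{\po(X-X_0)}\le\fro{X-X_0}\le\epsilon$; because $\epsilon\asymp\delta\sqrt{p}$, a half-strength bound (constant $\delta/2$) on $\mathcal N$ upgrades to the full \eqref{porip} with constant $\delta$ on all of $\mathcal S$, and scale-invariance removes the normalization $\fro{X}=1$.

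The hard part will be exactly this last paragraph: the net carries $\exp(\Theta((m+n)k\log n))$ points, so the single-matrix failure exponent must itself be of order $kn\log n$, and tracking the two \emph{independent} factors of $k$ --- one from the variance bound $M\propto\mu^2k$ and one from the $(m+n)k$-dimensional net --- is what forces the $k^2$ (and likewise the $\mu^2$ and $1/\delta^2$) in the density $p$. A secondary subtlety is that the net must be taken inside the non-convex incoherent set $\mathcal S$ so that the entrywise bound is available at every net point, and $\epsilon$ must be calibrated as $\asymp\delta\sqrt p$ so that $\log(1/\epsilon)$ stays $O(\log n)$ uniformly in $\delta$.
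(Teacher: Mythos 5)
Your proposal is correct, and it follows the paper's overall skeleton---pointwise Bernstein concentration for a fixed incoherent matrix, union bound over a discretization, Lipschitz transfer via the contraction $\fro{\po(X-Y)}\le\fro{X-Y}$---but you implement the key discretization step by a genuinely different route. Your entrywise bound $|X_{ij}|\le \mu\sqrt{k}\,\fro{X}/\sqrt{mn}$ is exactly the paper's Lemma~\ref{mxinc}, and your single-matrix tail $2\exp(-\delta^2 pmn/3\mu^2k)$ matches Lemma~\ref{ripmc1} with $\alpha=\mu\sqrt{k}$. Where you diverge is Lemma~\ref{epsnetinc}: the paper builds an explicit net by rounding the SVD factors $U,\Sigma,V$ to a grid of resolution $\rho=\epsilon/\sqrt{9k^2mn}$, which forces it to verify by hand (via Lemma~\ref{triva}) that the rounded matrices remain $4\mu\sqrt{k}$-regular, and then to transfer from the net additively on \emph{squared} Frobenius norms with error terms $3\epsilon k$ at the coarse scale $\epsilon=\delta/9mnk$, ending with deviation $2\delta p\fro{X}^2$ (the factor $2$ being absorbed by rescaling $\delta$). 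You instead take a maximal $\epsilon$-separated subset of the incoherent unit sphere itself and invoke the standard covering bound $(C'/\epsilon)^{(m+n+1)k}$ for the rank-$\le k$ Frobenius ball: this makes every net point automatically $\mu$-incoherent (no regularity slack needed), and your norm-level transfer with $\epsilon\asymp\delta\sqrt{p}$ yields the clean constant $\delta$. Your calibration check---that the density hypothesis forces $\delta^2 p\ge C\mu^2k^2\log n/m$, hence $\epsilon\gtrsim\sqrt{\log n/n}$ and $\log(1/\epsilon)=O(\log n)$ uniformly in $\delta$---is precisely the point a careless version would miss, and you handle it correctly. The trade-off: the paper's construction is fully self-contained (the covering bound you quote is itself proved by essentially the paper's quantization argument), while yours is shorter and avoids the factor-2 loss; the exponent accounting ($\delta^2 pmn/3\mu^2k\gtrsim nk\log n$ against $\log|\mathcal{N}|=O(nk\log n)$, producing the two separate factors of $k$) and the resulting density threshold $C\mu^2k^2\log n/\delta^2 m$ are identical in both arguments.
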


Motivated by the above theorem and supported by empirical evidence (Figures \ref{fig:matcomp1}, \ref{fig:conjecture}) we hypothesize that $\psvd$ achieves exact recovery from an almost optimal number of samples.

\begin{conjecture}
\label{conj:svp}
Fix $\mu,k$ and $\delta \leq 1/3$. Then, there exists a constant $C$ such that for a $\mu$-incoherent matrix $X^*$ of rank at most $k$ and $\Omega$ sampled from the Bernoulli model with density $p \geq C \mu^2 k^2 \log n/\delta^2 m$, $\psvd$ with step-size $\eta_t = 1/(1+\delta)p$ converges to $X^*$ with high probability. Moreover, $\psvd$ outputs a matrix $X$ of rank at most $k$ such that $\fro{\po(X)- \po(X^*)}^2 \leq \epsilon$ after $O_{\mu,k}\left(\left\lceil \log \left(\frac{1}{\epsilon} \right) \right\rceil\right)$ iterations.
\end{conjecture}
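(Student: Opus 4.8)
\textbf{Proof proposal for Conjecture~\ref{conj:svp}.}

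The plan is to reduce the matrix-completion analysis to the $\rip$-based analysis of Theorem~\ref{ripmain} by showing that the restricted isometry property holds along the entire trajectory of $\psvd$. The key difficulty, and the reason this is stated as a conjecture rather than a theorem, is that Theorem~\ref{mcrip} only guarantees the $\rip$-like bound in \eqref{porip} for $\mu$-incoherent matrices, whereas the iterates $X^t$ produced by the update \eqref{mcpupdate} are not a~priori incoherent. So the crux is an \emph{incoherence-preservation} argument: I must show that if $X^*$ is $\mu$-incoherent and $X^t$ is $\mu'$-incoherent for a suitable $\mu' = O_{\mu,k}(1)$, then $X^{t+1}$ produced by the hard-thresholding step is again $\mu'$-incoherent with high probability. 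This is exactly the hypothesis that the empirical evidence in Figure~\ref{fig:conjecture} supports, and it is the main obstacle.

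Granting incoherence preservation, the convergence argument mirrors the proof of Theorem~\ref{ripmain}. First I would set up the correspondence between the matrix-completion update \eqref{mcpupdate} and the generic $\psvd$ update \eqref{mainupdate}: the affine operator is $\aff = \po/\sqrt{p}$ (rescaled so that $\ex[\fro{\aff(X)}^2] = \fro{X}^2$), the measurement vector is $b = \aff(X^*)$, and the step-size $\eta_t = 1/((1+\delta)p)$ matches $1/(1+\delta_{2k})$ once the rescaling is absorbed. Applying Theorem~\ref{mcrip} with rank parameter $2k$ and isometry constant $\delta \le 1/3$, and conditioning on the high-probability event that \eqref{porip} holds, gives $\delta_{2k} \le \delta \le 1/3$ \emph{restricted to the incoherent matrices in question}. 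The sampling density $p \geq C\mu^2 k^2 \log n/\delta^2 m$ is precisely what Theorem~\ref{mcrip} requires (after replacing $\mu$ by the iterate incoherence bound $\mu'$ and $k$ by $2k$, which only affects the universal constant).

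Next I would run the geometric-contraction step inductively. At each iteration $t$, both $X^*$ and $X^t$ are incoherent (by the preservation claim), hence their difference $X^* - X^t$ has rank at most $2k$ and, being a combination of two incoherent rank-$k$ matrices, satisfies the $\rip$ bound \eqref{porip} with constant $\delta_{2k} \le 1/3$. The argument of Lemma~\ref{lem:lem1} then applies verbatim to give $\psi(X^{t+1}) \le \frac{2\delta_{2k}}{1-\delta_{2k}}\,\psi(X^t)$ in the noiseless case (here $\psi(X) = \frac12\fro{\po(X)-\po(X^*)}^2$ and $\psi(X^*)=0$), and since $\frac{2\delta_{2k}}{1-\delta_{2k}} < 1$ for $\delta_{2k} < 1/3$ we obtain geometric decay of the objective. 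Unfolding the recursion from $\psi(X^0) = \frac12\fro{\po(X^*)}^2$ down to $\epsilon$ yields the claimed $O_{\mu,k}(\lceil \log(1/\epsilon)\rceil)$ iteration bound, with the hidden constant absorbing the dependence on $\mu$, $k$, and $\delta$ through the contraction factor.

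The place where I expect the real work to lie is the incoherence-preservation step. One natural strategy is to analyze a single update $Y^{t+1} = X^t - \frac{1}{(1+\delta)p}(\po(X^t)-\po(X^*))$ and argue that, because $\po(X^t)-\po(X^*)$ is a sparse sampling of a low-rank incoherent difference, the perturbation added to $X^t$ has small spectral norm and delocalized singular vectors; one would then need a singular-vector perturbation bound (Davis--Kahan type) to control how far the top-$k$ singular subspace of $Y^{t+1}$ can rotate, and bound the $\ell_\infty$ norm of the resulting singular vectors. The obstruction is that $\ell_\infty$ (entrywise) control of singular vectors under perturbation is genuinely delicate and does not follow from spectral-norm perturbation bounds alone; this is why a complete proof remains open and why the statement is phrased as a conjecture, with the supporting numerical evidence of Figure~\ref{fig:conjecture} standing in for the missing analytic bound.
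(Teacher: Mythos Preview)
The statement is a \emph{conjecture}: the paper does not prove it. The surrounding discussion in the paper matches your outline exactly---Theorem~\ref{mcrip} establishes $\rip$ over incoherent matrices, and the paper states that this weaker $\rip$ ``along with a hypothesis bounding the incoherence of the iterates of $\psvd$'' would imply exact recovery, pointing to Figure~\ref{fig:conjecture} as empirical support for that hypothesis. So your proposal is not a competing proof but a faithful reconstruction of the paper's own heuristic argument, and you correctly identify incoherence preservation of the iterates as the open obstacle.

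One technical wrinkle worth flagging in your sketch: even granting that each $X^t$ is $\mu'$-incoherent, invoking Lemma~\ref{lem:lem1} verbatim requires the $\rip$ bound \eqref{porip} on the \emph{differences} $X^{t+1}-X^t$ and $X^*-X^t$. Incoherence in the sense of Definition~\ref{incoherence} is a property of singular vectors and is not preserved under subtraction; when $X^t$ is close to $X^*$ the difference can have small Frobenius norm while its entries remain of order $\mu\sqrt{k}(\fro{X^*}+\fro{X^t})/\sqrt{mn}$, so $X^*-X^t$ need not be $\alpha$-regular for any useful $\alpha$ and Theorem~\ref{mcrip} does not directly apply to it. A complete program would need either a version of Theorem~\ref{mcrip} covering differences of incoherent matrices, or a direct bound on $\fro{\po(X^*-X^t)}^2$ that does not route through regularity of the difference. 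The paper does not address this point either, so your proposal is at the same level of rigor as the paper's own discussion of the conjecture.
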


\begin{figure} 
\begin{center}
      \includegraphics[width=0.45\textwidth, height=4cm]{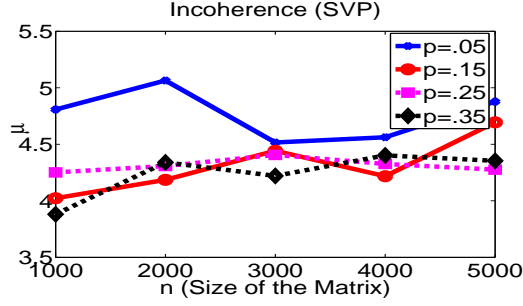}
\caption {Maximum incoherence $\max_t \mu(X^t)$  over the iterates of $\psvd$ for varying densities $p$ and sizes $n$ of randomly generated matrices (averaged over 20 runs). Note that the incoherence is bounded by a constant, supporting Conjecture~\ref{conj:svp}.}
\label{fig:conjecture}
 \end{center}
\end{figure}


\subsection{$\rip$ for Matrix Completion on Incoherent Matrices}\label{sec:weakrip} 
We now prove the $\rip$ property of Theorem \ref{mcrip} for the projection operator $\po$. To prove Theorem \ref{mcrip} we first show the theorem for a {\sl discrete} collection of matrices using Chernoff type large-deviation bounds and use standard quantization arguments to generalize to the continuous case. We first introduce some notation.

\newcommand{\mnorm}[1]{\|{#1}\|_\mathtt{mx}}
\begin{definition}
  For a matrix $X \in \rmn$, let $\mnorm{X} = \max_{i,j} |X_{ij}|$ and call $X$ $\alpha$-regular if 
\[ \mnorm{X} \leq \frac{\alpha}{\sqrt{mn}}\cdot \fro{X}.\]
\end{definition}

We need  Bernstein's inequality \cite{Bernstein2009} stated below. 
\begin{lemma}[{\bf Bernstein's inequality}]
\label{lem:bernstein}
Let $X_1, X_2, \dots, X_n$ be independent random variables with $E[X_i]=0, \forall i$. Furthermore, let $|X_i|\leq M$. Then,
$$P[\sum_i X_i >t]\leq \exp\left(-\frac{t^2/2}{\sum_i Var(X_i)+Mt/3}\right).$$
\end{lemma}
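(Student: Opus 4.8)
The plan is to prove Bernstein's inequality by the standard Chernoff exponential-moment method: bound the tail by an optimized moment generating function (MGF). First I would apply Markov's inequality to $e^{\lambda \sum_i X_i}$ for a free parameter $\lambda > 0$, and use independence to factor the expectation, giving
$$P\left[\sum_i X_i > t\right] \;\leq\; e^{-\lambda t}\, E\left[e^{\lambda \sum_i X_i}\right] \;=\; e^{-\lambda t} \prod_i E\left[e^{\lambda X_i}\right].$$
Everything then reduces to controlling each individual MGF $E[e^{\lambda X_i}]$ and choosing $\lambda$ well.

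The \emph{key step} --- and the only place where the hypotheses $E[X_i]=0$ and $|X_i|\le M$ enter --- is a per-variable MGF bound. Expanding $e^{\lambda X_i}$ as a power series and using $E[X_i]=0$ to kill the linear term leaves $E[e^{\lambda X_i}] = 1 + \sum_{k\ge 2} \lambda^k E[X_i^k]/k!$. Since $|X_i|\le M$ I would bound $E[X_i^k]\le E[|X_i|^k] \le M^{k-2}E[X_i^2] = M^{k-2}\Var(X_i)$, using $\Var(X_i)=E[X_i^2]$. The main technical trick is the factorial estimate $k! \ge 2\cdot 3^{k-2}$ for all $k\ge 2$ (an easy induction), which converts the tail of the series into a geometric series:
$$\sum_{k\ge 2} \frac{(\lambda M)^k}{k!} \;\le\; \frac{(\lambda M)^2}{2}\sum_{j\ge 0}\left(\frac{\lambda M}{3}\right)^j \;=\; \frac{(\lambda M)^2}{2(1-\lambda M/3)},$$
valid for $0 < \lambda < 3/M$. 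Combined with $1+x\le e^x$ this yields $E[e^{\lambda X_i}] \le \exp\!\big(\lambda^2 \Var(X_i)/(2(1-\lambda M/3))\big)$, and it is precisely the $3$ in this denominator that produces the $Mt/3$ term in the final bound.

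Finally, writing $\sigma^2 = \sum_i \Var(X_i)$ and multiplying the per-variable bounds, I obtain $P[\sum_i X_i > t] \le \exp(-\lambda t + \lambda^2 \sigma^2/(2(1-\lambda M/3)))$. I would then optimize by taking $\lambda = t/(\sigma^2 + Mt/3)$, for which one checks $\lambda < 3/M$ holds automatically. With this choice $1-\lambda M/3 = \sigma^2/(\sigma^2 + Mt/3)$, so the quadratic term collapses to $\lambda t/2$ and the exponent becomes $-\lambda t/2 = -\frac{t^2/2}{\sigma^2 + Mt/3}$, which is exactly the claimed bound; the degenerate case $\sigma^2=0$ forces every $X_i=0$ and is trivial. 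The hard part will be the MGF estimate: getting the Bernstein form (with $Mt/3$) rather than the cruder Bennett form hinges on the geometric-series comparison via $k!\ge 2\cdot 3^{k-2}$, whereas the Chernoff setup and the optimization over $\lambda$ are routine.
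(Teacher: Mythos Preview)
Your proof is correct and is the standard Chernoff-method derivation of Bernstein's inequality. Note, however, that the paper does not actually prove this lemma: it is stated with a citation and used as a black box, so there is no ``paper's own proof'' to compare against. Your argument supplies exactly the textbook proof one would expect, and the computations (the factorial bound $k!\ge 2\cdot 3^{k-2}$, the geometric summation of the MGF tail, and the choice $\lambda = t/(\sigma^2 + Mt/3)$ that makes the exponent collapse to $-\lambda t/2$) all check out.
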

 
\begin{lemma}\label{ripmc1}
Fix an $\alpha$-regular $X \in \rmn$ and $0 < \delta < 1$. Then, for $\Omega \subseteq [m] \times [n]$ chosen according to the Bernoulli model, with each pair $(i,j) \in \Omega$ chosen independently with probability $p$,
\[ \pr[\,\left|\fro{\po(X)}^2 - p \fro{X}^2\right| \geq \delta p \fro{X}^2\,] \leq 2 \exp\left(-\frac{\delta^2 p mn}{3\, \alpha^2}\right).\]
\end{lemma}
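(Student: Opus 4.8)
The plan is to express $\fro{\po(X)}^2$ as a sum of independent random variables and apply Bernstein's inequality (Lemma~\ref{lem:bernstein}). First I would introduce the indicators $Z_{ij} = \ind{(i,j)\in\Omega}$, which under the Bernoulli model are independent with $\ex[Z_{ij}] = p$, so that $\fro{\po(X)}^2 = \sum_{i,j} Z_{ij} X_{ij}^2$ and hence $\ex[\fro{\po(X)}^2] = p\fro{X}^2$. Setting $Y_{ij} = (Z_{ij}-p)X_{ij}^2$, the $Y_{ij}$ are independent, mean-zero, and
\[ \sum_{i,j} Y_{ij} = \fro{\po(X)}^2 - p\fro{X}^2, \]
which is exactly the deviation we must control.

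Next I would assemble the two ingredients Bernstein requires, namely a uniform bound $M$ on $|Y_{ij}|$ and the total variance, and this is precisely where $\alpha$-regularity enters. Since $|Z_{ij}-p|\leq 1$, we have $|Y_{ij}|\leq X_{ij}^2 \leq \mnorm{X}^2 \leq \frac{\alpha^2}{mn}\fro{X}^2$, so we may take $M = \frac{\alpha^2}{mn}\fro{X}^2$. For the variance, $\Var(Y_{ij}) = p(1-p)X_{ij}^4 \leq p X_{ij}^4$, and therefore
\[ \sum_{i,j}\Var(Y_{ij}) \leq p\sum_{i,j} X_{ij}^4 \leq p\,\mnorm{X}^2\,\fro{X}^2 \leq p\,\frac{\alpha^2}{mn}\,\fro{X}^4, \]
where I used $\sum_{i,j}X_{ij}^4 \leq \mnorm{X}^2\sum_{i,j}X_{ij}^2 = \mnorm{X}^2\fro{X}^2$ and regularity once more.

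Then I would set $t = \delta p\fro{X}^2$ and substitute into Bernstein's bound. The exponent becomes
\[ \frac{t^2/2}{\sum_{i,j}\Var(Y_{ij}) + Mt/3} \;\geq\; \frac{\delta^2 p^2 \fro{X}^4/2}{p\,\frac{\alpha^2}{mn}\,\fro{X}^4\,(1+\delta/3)} \;=\; \frac{\delta^2 p\, mn}{2\alpha^2(1+\delta/3)}. \]
Because $\delta < 1$ we have $1+\delta/3 < 3/2$, so $\frac{1}{2(1+\delta/3)} > \frac{1}{3}$ and the exponent exceeds $\frac{\delta^2 p\, mn}{3\alpha^2}$. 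This controls the upper tail $\pr[\sum_{i,j} Y_{ij} > t]$; applying the identical argument to $-Y_{ij}$ (same $M$, same variance) controls the lower tail, and a union bound over the two tails yields the factor of $2$ in the statement.

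The computation is entirely routine, so I do not anticipate a genuine obstacle; the only points demanding care are bookkeeping. One must invoke $\alpha$-regularity in two distinct places — for the sup-norm bound $M$ and for the estimate $\sum_{i,j}X_{ij}^4 \leq \mnorm{X}^2\fro{X}^2$ — and one must verify that the slack afforded by $\delta < 1$ (equivalently $1+\delta/3 \leq 3/2$) is exactly what converts the raw Bernstein exponent $\tfrac{1}{2(1+\delta/3)}$ into the claimed constant $\tfrac13$.
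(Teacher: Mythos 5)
Your proposal is correct and follows essentially the same route as the paper: center the indicator sum, use $\alpha$-regularity to bound both $M$ and the total variance by $\frac{\alpha^2}{mn}\fro{X}^2$ and $\frac{p\,\alpha^2}{mn}\fro{X}^4$ respectively, and apply Bernstein with $t=\delta p\fro{X}^2$. If anything, your bookkeeping is slightly more careful than the paper's: you retain the factor $\tfrac12$ from $t^2/2$ (yielding the exponent $\frac{\delta^2 p\,mn}{2\alpha^2(1+\delta/3)}$, whereas the paper's intermediate display drops it, though the final bound survives since $2(1+\delta/3)<3$ for $\delta<1$), and you explicitly symmetrize the one-sided Bernstein bound to justify the factor of $2$.
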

\begin{proof}
For $(i,j) \in [m] \times [n]$, let $\omega_{ij}$ be the indicator variables with $\omega_{ij} = 1$ if $(i,j) \in \Omega$ and $0$ otherwise. Then, $\omega_{ij}$ are independent random variables with $\pr[\omega_{ij} = 1] = p$. Let random variable $Z_{ij} = \omega_{ij} X_{ij}^2$. Note that, 
$$E[Z_{ij}]=pX_{ij}^2,\ Var(Z_{ij})=p(1-p)X_{ij}^4.$$
Observe that $|Z_{ij} - E[Z_{ij}]| \leq |X_{ij}|^2 \leq (\alpha^2/mn) \cdot \fro{X}^2$. Thus, 
\begin{equation}
\label{eq:boundM}
M = \max_{i,j} |Z_{ij} - \ex[Z_{ij}]| \leq \frac{\alpha^2}{mn} \fro{X}^2.
\end{equation}
Now, define random variable $S=\sum_{i,j} Z_{ij}=\sum_{i,j} \omega_{ij} X_{ij}^2=\fro{\po(X)}^2$. Note that, $E[S]=p\fro{X}^2$. Since, $Z_{ij}$ are independent random variables,
\begin{equation}
\label{eq:boundVar}
 Var(S) = \sum_{i,j} p(1-p)\, X_{ij}^4 \leq p \, (\max_{i,j} X_{ij}^2) \cdot \sum_{i,j} X_{ij}^2 \leq \frac{p\,\alpha^2}{mn} \fro{X}^4.
\end{equation}

Using Bernstein's inequality (Lemma~\ref{lem:bernstein}) for $S$ with $t = \delta p \fro{X}^2$ and Equations \eqref{eq:boundM} and \eqref{eq:boundVar} we get,
\begin{align*}
  \pr[|S - \ex[S]| > t ] &\leq 2 \exp\left(\frac{-t^2/2}{Var(Z) + Mt/3}\right)\\
    &\leq 2 \exp\left(-\frac{\delta^2 p mn}{\alpha^2(1+\delta/3)}\right)\\
& \leq 2 \exp\left(-\frac{\delta^2 p mn}{3 \alpha^2}\right).
\end{align*}

\end{proof}

We now discretize the space of low-rank incoherent matrices so as to be able to use the above lemma with a union bound. We need the following simple lemmas.

\begin{lemma}\label{mxinc}
Let $X \in \rmn$ be a $\mu$-incoherent matrix of rank at most $k$. Then $X$ is $\mu \sqrt{k}$-regular.
\end{lemma}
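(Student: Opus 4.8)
The plan is to work directly from the singular value decomposition and control each entry of $X$ by the incoherence of its singular vectors. Write $X = U \Sigma V^T$ with $U \in \R^{m \times k}$, $V \in \R^{n \times k}$ having orthonormal columns and $\Sigma = \mathrm{diag}(\sigma_1, \ldots, \sigma_k)$ the (at most $k$) nonzero singular values. The key observation is simply that the $(i,j)$ entry expands as $X_{ij} = \sum_{l=1}^k U_{il}\, \sigma_l\, V_{jl}$, so the entrywise max norm $\mnorm{X}$ can be bounded termwise using the coordinate bounds that incoherence supplies on $U$ and $V$.

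First I would apply the triangle inequality to this expansion and insert the incoherence bounds $|U_{il}| \le \sqrt{\mu}/\sqrt{m}$ and $|V_{jl}| \le \sqrt{\mu}/\sqrt{n}$, which gives
\[
|X_{ij}| \;\le\; \sum_{l=1}^k |U_{il}|\, \sigma_l\, |V_{jl}| \;\le\; \frac{\mu}{\sqrt{mn}} \sum_{l=1}^k \sigma_l
\]
for every pair $(i,j)$. This reduces the whole problem to comparing the sum of the singular values (the trace/nuclear norm) against the Frobenius norm.

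The one substantive step is then to bound $\sum_{l=1}^k \sigma_l$ by $\sqrt{k}\,\fro{X}$. This follows from Cauchy-Schwarz applied to the vector of singular values against the all-ones vector of length $k$, together with the identity $\fro{X}^2 = \sum_{l=1}^k \sigma_l^2$. Combining the two estimates yields $\mnorm{X} \le \frac{\mu\sqrt{k}}{\sqrt{mn}}\,\fro{X}$, which is exactly the assertion that $X$ is $\mu\sqrt{k}$-regular.

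There is no real obstacle here: the lemma is essentially a one-line consequence of the SVD expansion plus Cauchy-Schwarz. The only points worth being careful about are that the rank bound enters solely through the number of terms $k$ in the Cauchy-Schwarz step (a matrix of exact rank $r < k$ would give the sharper factor $\sqrt{r}$), and that incoherence is invoked exactly once per factor, to convert the entries of $U$ and $V$ into the uniform $1/\sqrt{m}$ and $1/\sqrt{n}$ scale bounds.
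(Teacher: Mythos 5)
Your proposal is correct and follows essentially the same route as the paper: expand $X_{ij} = \sum_{l=1}^k U_{il}\Sigma_{ll}V_{jl}$ via the SVD, bound each factor using incoherence to get $|X_{ij}| \leq \frac{\mu}{\sqrt{mn}}\sum_{l}\Sigma_{ll}$, and then apply Cauchy--Schwarz to bound $\sum_l \Sigma_{ll} \leq \sqrt{k}\,\fro{X}$. Your remark that a matrix of exact rank $r < k$ would yield the sharper factor $\sqrt{r}$ is a correct observation that the paper does not make, but the arguments are otherwise identical.
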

\begin{proof}
Let $X = U \Sigma V^T$ be the singular value decomposition of $X$. Then, $X_{ij} = U_i \Sigma V_j^T$, where $U_i,V_j$ are the $i$'th and $j$'th rows of $U,V$ respectively. Now,
\begin{equation*}\label{}
  |X_{ij}|=|e_i^TU \Sigma V^Te_j| = |\sum_{l=1}^k U_{il} \Sigma_{ll} V_{jl}|  \leq \sum_{l=1}^k \Sigma_{ll} |U_{il}| |V_{jl}| .
\end{equation*}
Since $X$ is $\mu$-incoherent, 
\begin{equation*}
|X_{ij}|\leq \sum_{l=1}^k \Sigma_{ll} |U_{il}| |V_{jl}|\leq \frac{\mu}{\sqrt{mn}} \cdot (\sum_{l=1}^k \Sigma_{ll})\leq\frac{\mu}{\sqrt{mn}} \cdot \sqrt{k}\cdot (\sum_{l=1}^k \Sigma_{ll}^2)^{1/2}= \frac{\mu\sqrt{k}}{\sqrt{mn}} \cdot \fro{X}.
\end{equation*}
\end{proof}

\begin{lemma}\label{triva}
Let $a,b,c,x,y,z \in [-1,1]$. Then, 
\[ |abc - xyz| \leq |a-x| + |b-y| + |c-z|.\]
\end{lemma}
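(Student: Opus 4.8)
The plan is to prove the elementary inequality
\[ |abc - xyz| \leq |a-x| + |b-y| + |c-z| \]
for $a,b,c,x,y,z \in [-1,1]$ by a standard telescoping argument. The key idea is to interpolate from $abc$ to $xyz$ one coordinate at a time, changing $a \to x$, then $b \to y$, then $c \to z$, and to control each step using the fact that the remaining factors are bounded by $1$ in absolute value.

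First I would write the telescoping decomposition
\begin{align*}
abc - xyz &= (abc - xbc) + (xbc - xyc) + (xyc - xyz) \\
&= (a-x)bc + x(b-y)c + xy(c-z).
\end{align*}
Then I would apply the triangle inequality to the right-hand side, giving
\[ |abc - xyz| \leq |a-x|\,|bc| + |x|\,|b-y|\,|c| + |xy|\,|c-z|. \]
Next I would invoke the hypothesis $a,b,c,x,y,z \in [-1,1]$, which guarantees $|bc| \leq 1$, $|x||c| \leq 1$, and $|xy| \leq 1$, since each of these is a product of at most two numbers of absolute value at most $1$. Dropping these factors (replacing them by their upper bound $1$) yields exactly
\[ |abc - xyz| \leq |a-x| + |b-y| + |c-z|, \]
as desired.

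This is a routine calculation with no real obstacle; the only point requiring the slightest care is choosing the telescoping order so that each intermediate difference isolates a single changed coordinate multiplied by factors all lying in $[-1,1]$. The decomposition above is symmetric enough that any ordering of the three substitutions works equally well. The role of the interval constraint $[-1,1]$ is solely to ensure the leftover products of ``untouched'' coordinates never exceed $1$ in magnitude, which is what collapses the bound to the clean sum of coordinate differences.
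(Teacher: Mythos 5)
Your proof is correct and complete: the telescoping decomposition $abc - xyz = (a-x)bc + x(b-y)c + xy(c-z)$, the triangle inequality, and the bounds $|bc|, |x||c|, |xy| \leq 1$ are exactly what is needed. The paper actually states Lemma~\ref{triva} without any proof, treating it as an elementary fact, and your argument is precisely the standard one it implicitly relies on.
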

The following lemma shows that the space of low-rank $\mu$-incoherent matrices can be discretized into a reasonably small set of regular matrices such that every low-rank  $\mu$-incoherent matrix is close to a matrix from the set. 
\begin{lemma}\label{epsnetinc}
For all $0 < \epsilon < 1/2$, $\mu \geq 1$, $m,n\geq 3$ and $k\geq 1$, there exists a set $S(\mu,\epsilon) \subseteq \rmn$ with  $|S(\mu,\epsilon)| \leq (mnk/\epsilon)^{3\,(m+n)k}$ such that the following holds. For any $\mu$-incoherent $X \in \rmn$ of rank $k$ with $\|X\|_2 = 1$, there exists $Y \in S(\mu,\epsilon)$ such that $\fro{Y-X} < \epsilon$ and $Y$ is $(4\mu\sqrt{k})$-regular.
\end{lemma}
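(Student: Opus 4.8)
The plan is to build the net by quantizing the singular value decomposition of $X$. Write $X = U\Sigma V^T$ with $U \in \mathbb{R}^{m\times k}$, $V \in \mathbb{R}^{n\times k}$, and $\Sigma \in \mathbb{R}^{k\times k}$ diagonal. The crucial observation — and the one that makes the cardinality bound free of $\mu$ — is that \emph{independently of} $\mu$, every factor entry lies in $[-1,1]$: the columns of $U$ and $V$ are orthonormal so $|U_{il}|,|V_{jl}| \le 1$, and $0 \le \Sigma_{ll} \le \|X\|_2 = 1$. Hence I would fix a uniform grid of spacing $\gamma = \epsilon/(3k\sqrt{mn})$ on $[-1,1]$ and take $S(\mu,\epsilon)$ to be the (in fact $\mu$-independent) set of all matrices $\wt U\wt\Sigma\wt V^T$ whose factor entries lie on this grid.

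Given such $X$, round each entry of $U,\Sigma,V$ to its nearest grid point to obtain $\wt U,\wt\Sigma,\wt V$ and set $Y = \wt U\wt\Sigma\wt V^T \in S(\mu,\epsilon)$, so every factor entry moves by at most $\gamma/2$. To control $\fro{Y-X}$ I expand entrywise,
\[ (Y-X)_{ij} = \sum_{l=1}^{k}\bigl(\wt U_{il}\wt\Sigma_{ll}\wt V_{jl} - U_{il}\Sigma_{ll}V_{jl}\bigr), \]
and apply Lemma~\ref{triva} to each summand (all arguments lie in $[-1,1]$), giving $|(Y-X)_{ij}| \le 3k\cdot(\gamma/2)$. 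Summing over the $mn$ entries yields $\fro{Y-X} \le \tfrac{3k\gamma}{2}\sqrt{mn} = \epsilon/2 < \epsilon$, and simultaneously $\mnorm{Y-X} \le \tfrac{3k\gamma}{2} = \tfrac{\epsilon}{2\sqrt{mn}}$.

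For the cardinality, each of the $mk + nk + k = (m+n+1)k$ quantized scalars takes at most $2/\gamma + 1 \le 3/\gamma = 9k\sqrt{mn}/\epsilon$ values (using $\gamma < 1$). Since $m,n\ge 3$, $k\ge 1$, $\epsilon < 1/2$ force $mnk/\epsilon \ge 18 \ge 9$, one has $9k\sqrt{mn}/\epsilon \le 9\cdot(mnk/\epsilon) \le (mnk/\epsilon)^2$, whence $|S(\mu,\epsilon)| \le (mnk/\epsilon)^{2(m+n+1)k} \le (mnk/\epsilon)^{3(m+n)k}$, the last step using $2(m+n+1)\le 3(m+n)$ (valid as $m+n\ge 2$).

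Finally I would transfer regularity from $X$ to $Y$. By Lemma~\ref{mxinc}, $X$ is $\mu\sqrt k$-regular, i.e.\ $\mnorm{X} \le \tfrac{\mu\sqrt k}{\sqrt{mn}}\fro{X}$, and $\|X\|_2 = 1$ gives $\fro{X} \ge 1$, hence $\fro{Y} \ge \fro{X} - \epsilon > 1/2$. Combining $\mnorm{Y} \le \mnorm{X} + \mnorm{Y-X}$ with $\fro{X} \le \fro{Y}+\epsilon$, the excess over $\tfrac{\mu\sqrt k}{\sqrt{mn}}\fro{Y}$ is at most $\tfrac{(\mu\sqrt k + 1)\epsilon}{\sqrt{mn}}$, which is bounded by $\tfrac{3\mu\sqrt k}{\sqrt{mn}}\fro{Y}$ because $\mu\sqrt k \ge 1$ and $\fro{Y} > 1/2$; this gives $\mnorm{Y} \le \tfrac{4\mu\sqrt k}{\sqrt{mn}}\fro{Y}$, so $Y$ is $(4\mu\sqrt k)$-regular. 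I expect the only real obstacle to be keeping the count free of $\mu$: the natural temptation is to grid $U$ over its incoherence range $[-\sqrt{\mu/m},\sqrt{\mu/m}]$, which reinserts $\mu$ into the base of the count and breaks the stated bound for large $\mu$; the resolution is exactly the unconditional bound $|U_{il}|,|V_{jl}|,\Sigma_{ll}\le 1$, which decouples the grid from $\mu$ and confines all $\mu$-dependence to the slack-rich regularity estimate, where the constant $4$ absorbs it comfortably.
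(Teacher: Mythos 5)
Your proof is correct and follows essentially the same route as the paper's: quantize the SVD factors $U,\Sigma,V$ entrywise on a grid of spacing $\Theta(\epsilon/k\sqrt{mn})$, control the entrywise error of the product via Lemma~\ref{triva}, count grid points to get the $(mnk/\epsilon)^{3(m+n)k}$ bound, and transfer regularity from $X$ to $Y$ via Lemma~\ref{mxinc} together with $\fro{Y} > \fro{X} - \epsilon > 1/2$. The only real difference is your closing worry, which is misplaced: the paper \emph{does} grid the incoherence box, but with spacing scaled by $\sqrt{\mu/m}$ (resp.\ $\sqrt{\mu/n}$), so the number of grid values $|D(\rho)| < 2/\rho$ remains $\mu$-free and the count is unharmed --- though that choice silently uses $\sqrt{\mu/m} \leq 1$ to bound the rounding error by $\rho$, a wrinkle your uniform $[-1,1]$ grid based on $|U_{il}|, |V_{jl}|, \Sigma_{ll} \leq 1$ cleanly avoids.
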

\begin{proof}
We construct $S(\mu,\epsilon)$ by discretizing the space of low-rank incoherent matrices. Let $\rho = \epsilon/\sqrt{9k^2 mn}$ and $D(\rho) = \{ \rho \, i: i \in \mathbb{Z}, |i| < \lfloor 1/\rho\rfloor\}$. Let 
\[ U(\rho) = \{U \in \reals^{m \times k}: U_{ij} \in (\sqrt{\mu/m}) \cdot D(\rho)\; \},\] 
\[ V(\rho) = \{V \in \reals^{n \times k}: V_{ij} \in (\sqrt{\mu/n}) \cdot D(\rho)\; \}, \]
\[ \Sigma(\rho) = \{\Sigma \in \reals^{k \times k}: \Sigma_{ij} = 0, i \neq j,\; \Sigma_{ii} \in D(\rho) \},\]
\[ S(\mu,\epsilon) = \{\,U \Sigma V^T: U \in U(\rho), \Sigma \in \Sigma(\rho), V \in V(\rho)\,\}.\]
We will show that $S(\mu,\epsilon)$ satisfies the conditions of the Lemma. Observe that $|D(\rho)| < 2/\rho$. Thus, 
\[|U(\rho)| < (2/\rho)^{mk},\;\;\; |V(\rho)| < (2/\rho)^{nk},\;\;\; |\Sigma(\rho)| < (2/\rho)^{k}.\]
Hence, $|S(\mu,\epsilon)| < (2/\rho)^{mk+nk+k} < (mnk/\epsilon)^{3 (m+n) k}$.

Fix a $\mu$-incoherent $X \in \rmn$ of rank at most $k$ with $\|X\|_2 = 1$. Let the singular value decomposition of $X$ be $X = U \Sigma V^T$.
Let $U_1$ be the matrix obtained by rounding entries of $U$ to integer multiples of $\sqrt{\mu}\,\rho/\sqrt{m}$ as follows: for $(i,l) \in [m] \times [k]$, let
\[ (U_1)_{il} =  \frac{\sqrt{\mu}\rho}{\sqrt{m}} \cdot \left\lfloor U_{il} \frac{\sqrt{m}}{\sqrt{\mu}\,\rho}\right \rfloor .\]
Now, since $|U_{il}| \leq \sqrt{\mu}/\sqrt{m}$, it follows that $U_1 \in U(\rho)$. Further, for all $i \in [m], l \in [k]$, 
\[ |(U_1)_{il} - U_{il}| < \frac{\sqrt{\mu}}{\sqrt{m}} \, \rho \leq \rho.\]
Similarly, define $V_1,\Sigma_1$ by rounding entries of $V,\Sigma$ to integer multiples of $\sqrt{\mu}\,\rho/\sqrt{n}$ and $\rho$ respectively. Then, $V_1 \in V(\rho)$, $\Sigma_1 \in \Sigma(\rho)$ and for $(j,l) \in [n] \times [k]$,    
\[ |(V_1)_{jl} - V_{jl}| < \frac{\sqrt{\mu}\rho}{\sqrt{n}} \leq \rho,\;\;\; |(\Sigma_1)_{ll} - \Sigma_{ll}| < \rho.\]
Let $X(\rho) = U_1 \Sigma_1 V_1^T$. Then, by the above equations and Lemma~\ref{triva}, for $i\in [m], l \in [k], j \in [n]$, 
\begin{equation*}\label{}
  |(U_1)_{il} (\Sigma_1)_{ll} (V_1)_{jl}- U_{il} \Sigma_{ll} V_{jl}| < 3 \rho.
\end{equation*}
Thus, for $i,j \in [m] \times [n]$, 
\begin{align}\label{eq1}
  |X(\rho)_{ij}-X_{ij}| &= |\sum_{l=1}^k (U_1)_{il} (\Sigma_1)_{ll} (V_1)_{jl} - U_{il} \Sigma_{ll} V_{jl}|\nonumber \\
  &\leq \sum_{l=1}^k | (U_1)_{il} (\Sigma_1)_{ll} (V_1)_{jl} - U_{il} \Sigma_{ll} V_{jl}| \nonumber\\
  &< 3 k \rho.
\end{align}
Using Lemma~\ref{mxinc} and Equation \eqref{eq1}
\[ \mnorm{X(\rho)} < \mnorm{X} + 3 k \rho \leq \frac{\mu\sqrt{k}}{\sqrt{mn}} \cdot \fro{X} + \frac{\epsilon}{\sqrt{mn}}.\]
Also, using \eqref{eq1},
\[ \fro{X(\rho)-X}^2 = \sum_{i,j} |X(\rho)_{ij}-X_{ij}|^2 < 9 k^2 mn \rho^2 = \epsilon^2.\]
Furthermore, using triangular inequality, $\fro{X(\rho)} > \fro{X} - \epsilon > \fro{X}/2$. Since, $\epsilon< 1$ and $\mu\sqrt{k}\|X\|_F\geq 1$,
\[\mnorm{X(\rho)} <  \frac{2 \mu\sqrt{k}}{\sqrt{mn}} \cdot \fro{X} < \frac{4\mu\sqrt{k}}{\sqrt{mn}} \cdot \fro{X(\rho)} .\]
Thus, $X(\rho)$ is $4\mu\sqrt{k}$-regular. The lemma now follows by taking $Y = X(\rho)$.
\end{proof}

We now prove Theorem \ref{mcrip} by combining Lemmas \ref{ripmc1} and \ref{epsnetinc}. 
\begin{proof}[Proof of Theorem \ref{mcrip}]
Let $m\leq n$, $\epsilon = \delta /9 mn k$ and 
\[ S'(\mu,\epsilon) = \{Y: Y \in S(\mu,\epsilon), Y\text{ is $4\mu\sqrt{k}$-regular}\},\]
where $S(\mu,\epsilon)$ is as in Lemma \ref{epsnetinc}. Then, by Lemma \ref{mcrip} and union bound, 
\begin{align*}  
 \pr\left[\,\left|\fro{\po(Y)}^2 - p \fro{Y }^2\right| \geq \delta p \fro{Y}^2\,\text{ for some $Y \in S'(\mu,\epsilon)$}\,\right]  &\leq 2 \left(\frac{mnk}{\epsilon}\right)^{3 (m+n) k} \exp\left(\frac{-\delta^2 pmn}{16\mu^2 k}\right)\\
&\leq \exp(C_1n k \log n) \cdot \exp\left(\frac{-\delta^2 pmn}{16\mu^2 k}\right),
\end{align*}
where $C_1\geq 0$ is a constant independent of $m,n,k$. 

Thus,  if $p > C \mu^2 k^2\log n/\delta^2 m$, where $C=16(C_1+1)$, with probability at least $1-\exp(- n\log n)$, the following holds 
\begin{equation}
  \label{eq:3}
\forall Y \in S'(\mu,\epsilon),\;\;\; |\fro{\po(Y)}^2 - p \fro{Y }^2| \leq \delta p \fro{Y}^2.  
\end{equation}
  As the statement of the theorem is invariant under scaling, it is enough to show the statement for all $\mu$-incoherent matrices $X$ of rank at most $k$ and $\|X\|_2 = 1$. Fix such a $X$ and suppose that \eqref{eq:3} holds. Now, by Lemma \ref{epsnetinc} there exists $Y \in S'(\mu,\epsilon)$ such that $\fro{Y-X} \leq \epsilon$. Moreover,
\[ \fro{Y}^2 \leq (\fro{X} + \epsilon)^2 \leq \fro{X}^2 + 2 \epsilon \fro{X} + \epsilon^2 \leq \fro{X}^2 + 3 \epsilon k.\]
Proceeding similarly, we can show that 
\begin{equation}
  \label{eq:1}
   |\fro{X}^2 - \fro{Y}^2| \leq 3 \epsilon k.
\end{equation}
Further, starting with $\fro{\po(Y-X)} \leq \fro{Y-X} \leq \epsilon$ and arguing as above we get that 
\begin{equation}
  \label{eq:2}
 |\fro{\po(Y)}^2 - \fro{\po(X)}^2| \leq 3 \epsilon k.  
\end{equation}
Combining inequalities \eqref{eq:1}, \eqref{eq:2} above, we have
\begin{align*}
  |\fro{\po(X)}^2 - p\fro{X}^2| &\leq |\fro{\po(X)}^2 - \fro{\po(Y)}^2|  + p\,|\fro{X}^2 - \fro{Y}^2| + | \fro{\po(Y)}^2 - p \fro{Y}^2|\\
  &\leq 6 \epsilon k + \delta p \fro{Y}^2 \text{\hspace{1.48in} from \eqref{eq:3}, \eqref{eq:1}, \eqref{eq:2}}\\
  &\leq 6 \epsilon k + \delta p (\fro{X}^2 + 3 \epsilon k) \text{\hspace{1in} from \eqref{eq:1}}\\
  &\leq 9 \epsilon k + \delta p \fro{X}^2\\
  &\leq 2 \delta p \fro{X}^2. \text{\hspace{1.75in} Since } \fro{X}^2\geq 1
\end{align*}
The theorem now follows.
\end{proof}


\section{Computational Issues and Related Work}\label{sec:compissues}
The affine rank minimization problem is a natural generalization to matrices of the following compressed sensing problem for vectors:
\begin{align}
  \min_x &\ \|x\|_0,\nonumber\\
\text{s.t.}&\ Ax=b,
\label{eq:cs}
\end{align}
where $\|x\|_0$ is the $l_0$ norm (size of the support) of $x\in \mathbb{R}^n$, $A\in \mathbb{R}^{m\times n}$ is the sensing matrix and $b\in\mathbb{R}^m$ are the measurements. 
Just as in the case of $\affr$, the compressed sensing problem is also NP-hard in general. 

However, a number of methods have been proposed recently to solve the problem for restricted families of sensing matrices. Most of the methods with provable theoretical guarantees assume that the sensing matrix $A$ satisfies restricted isometry properties similar to those in \eqref{rip}. Broadly speaking, existing compressed sensing approaches can be divided into three categories:
\begin{itemize}
\item {\bf $l_1$ relaxation}: These methods relax the non-convex $l_0$ objective function to the convex $l_1$ objective function \cite{CandesT2005,CandesR2007,Fuchs2005,DonohoET2006}. At a high level these results show that if the sensing matrix $A$ obeys $\rip$ or other $\rip$ like properties, then $l_1$ relaxation recovers the optimal sparse solution from an almost optimal $O(k \log n)$ measurements. 
\item {\bf Basis pursuit}: These methods greedily search for the subset of columns of $A$ that would span the optimal solution. Specifically, in each iteration, columns of the sensing matrix that have the highest correlation with the current {\sl residual} measurement vector are greedily added to the {\sl basis}. Assuming $\rip$, basis pursuit methods also guarantee recovery of the optimal solution from a near optimal number of measurements \cite{TroppN2008,NeedellTV2008}. 
\item {\bf Iterative Hard Thresholding (IHT)}: IHT based methods try to minimize $l_0$ norm directly by hard thresholding \cite{BlumensathD2009,GargK2009} the current candidate solution to a small support vector. Here again, exact-recovery guarantees are known assuming $\rip$. Recently, Garg and Khandekar \cite{GargK2009} demonstrated that their GradeS method  outperforms most of the existing compressed sensing algorithms empirically. 
\end{itemize}

As $\affr$ is a generalization of problem \eqref{eq:cs}, it is natural to ask if the above compressed sensing algorithms can be generalized to solve $\affr$. Interestingly, the answer is yes. Trace-norm relaxation approaches \cite{RechtFP2007} can be seen as a direct generalization of the $l_1$ relaxation approach. Similarly, the ADMiRA algorithm of Lee and Bresler \cite{LeeB2009} generalizes the CoSAMP algorithm of Tropp and Needell \cite{TroppN2008}. Finally, our approach is a generalization of the IHT approach.  Table~\ref{tab:armp} summarizes these three approaches and compares them in terms of a few desirable characteristics an algorithm for $\affr$ should have. 

 \begin{table}[ht]
   \centering
   \begin{tabular}[ht]{|c|c|c|c|c|}
 \hline
 Method&Generalization of&$\rip$ constant&Rate of Convergence&Noisy Measurements\\\hline
 Trace-norm  \cite{RechtFP2007}& $l_1$ relaxation&$\delta_{5k}< 1/10$&Not known&No\\\hline
 Trace-norm \cite{LeeB2009b} & $l_1$ relaxation &$\delta_{3k} < 1/4\sqrt{3}$ & Not known & Yes \\\hline
 ADMiRA \cite{LeeB2009}& Basis Pursuit&$\delta_{4k} < 1/\sqrt{32}$&Geometric&Yes\\\hline
 SVP, this paper& IHT& $\delta_{2k}\leq 1/3$ & Geometric & Yes\\\hline
   \end{tabular}
   \caption{Comparison of the existing approaches for $\affr$ with our $\psvd$ method}
   \label{tab:armp}
 \end{table}

Minimizing the trace-norm of a matrix subject to affine constraints can be cast as a semi-definite programming problem. However, algorithms for semi-definite programming, as used by most methods for minimizing trace-norm, are prohibitively expensive even for moderately large datasets. Recently, a variety of methods mostly based on iterative soft-thresholding have been proposed to solve the trace-norm minimization problem efficiently. For instance, Cai et al.~\cite{CaiCS2008} proposed a Singular Value Thresholding (SVT) algorithm which is based on Uzawa's algorithm\cite{ArrowHU1958}. A related approach based on linearized Bregman iteration was proposed by Ma et al.~\cite{MaGC2009}. Toh and Yun \cite{TohY2009}, while Ji and Ye \cite{JiY2009} proposed Nesterov's projected gradient based methods for optimizing the trace-norm. 

While the soft-thresholding based methods for trace-norm minimization are significantly faster than semi-definite programming approaches they suffer from an important bottleneck: though the final solution to the trace-norm minimization is a low-rank matrix, the rank of the iterate in intermediate iterations can be large. In contrast, the rank of the iterates in our method is always equal to the rank of the optimal solution.  

Also, though minimizing the trace-norm approximates the low-rank solution even in the presence of noise (see \cite{CandesP2009}, \cite{LeeB2009b} for instance), noise poses considerable computational challenges for trace-norm optimization. Cai et al.~propose a variant of SVT for handling noise that performs moderately well for uniformly bounded noise. However, the performance of SVT worsens considerably in the presence of outlier noise. $\psvd$ on the other hand is robust to both outlier and uniformly bounded noise as it minimizes the cumulative loss function $\|\aff(X)-b\|_2^2$.

For the case of low-rank matrix completion, Candes and Recht \cite{CandesR2008} obtained the first non-trivial results for the problem obtaining guaranteed completion for {\sl incoherent matrices} $X^*$ and randomly sampled entries $\Omega$. Candes and Recht show that for $X^*$ $\mu$-incoherent and $\Omega$ chosen at random with $|\Omega| \geq C(\mu)\,k^2 n^{1.2}$, trace-norm relaxation recovers the optimal solution. Building on the work of Candes and Recht, Candes and Tao \cite{CandesT2009} obtained the near-optimal bound of $|\Omega| \geq \min(C \mu^4 k^2 n \log^2 n, C \mu^2 k n \log^6 n)$ for exact-recovery via trace-norm minimization. However, the analysis of Candes and Recht, Candes and Tao is considerably complicated and minimizing trace-norm, even when using methods tailored for matrix-completion such as those of Cai et al.~is relatively expensive in practice. 

For the case of matrix completion, SVT has the important property that the intermediate iterations of the algorithm only require computing the singular value decomposition of a sparse matrix. This facilitates the use of fast SVD computing package such as PROPACK \cite{Larsen} that only require subroutines that compute matrix-vector products. 

Our $\psvd$ algorithm has a similar property facilitating fast computation of the update in equation \eqref{mcpupdate}; each iteration of $\psvd$ involves computing the SVD of the matrix $Y = X^t + \po(X^t-X^*)$, where $X^t$ is a matrix of rank at most $k$ whose SVD we know and $\po(X^t-X^*)$ is a sparse matrix. Thus, we can compute matrix-vector products of the form $Y x$ in time $O((m+n)k + |\Omega|)$.

In a different line of work, Keshavan et al.~\cite{KeshavanOM2009} obtained exact-recovery from uniformly sampled $\Omega$ with $|\Omega| \geq C(\mu,k) \, n \log n$ using different techniques. The first iteration of $\psvd$ is similar to the first step of Keshavan et al. However, after the first iteration, Keshavan et al.~use a sophisticated alternating minimization algorithm based on gradient descent on the Grassmannian manifold of low-rank matrices.  However, convergence of their alternating minimization algorithm is slow. The simplicity of the updates in $\psvd$ makes it both easier to implement and significantly less computationally intensive than the alternating minimization algorithm of Keshavan et al.

A related problem to the matrix completion problem is the problem of {\sl low-rank plus sparse} decomposition of a matrix addressed by Chandrasekaran et al.~\cite{ChandrasekaranSPW2009} and Wright et al.~\cite{WrightGRM2009}. Interestingly, Wright et al.~\cite{WrightGRM2009} show that the low-rank matrix completion problem can be reduced to the low-rank plus sparse decomposition problem. Here again, their method relies on the trace-norm relaxation and is significantly more computationally intensive than our algorithm. 
\subsection{Selecting rank ($k$)}
A drawback of our SVP method it requires rank $k$ of the optimal solution to be known beforehand. For ARMP, we propose using the following heuristic: run SVP with some initial guess $k$ and increment it by a fixed number (e.g, 10) until error $\|{\mathcal A}X-b\|^2\|$ incurred by SVP doesn't change.

For the matrix completion problem, in the first step of our SVP method, we compute singular values incrementally till we find a significant gap between singular values. Our heuristic is justified because: Keshavan et al. \cite{KeshavanOM2009} show that the top $k$ ($k$ being rank of optimal solution) singular values of the sampled matrix approximate the underlying matrix well, i.e., there should be a gap between $k$-th and $k+1$-th singular value.  

\section{Experimental Results}
\label{sec:results}
In this section, we empirically evaluate our $\psvd$ method for the affine rank minimization and low-rank matrix completion problems. For both problems we present empirical results on synthetic as well as real-world datasets. For $\affr$ we compare our method against the trace-norm based singular value thresholding (SVT) method \cite{CaiCS2008}. Note that although Cai et al.~present the SVT algorithm in the context of matrix completion problem, it can be easily adapted for $\affr$. For matrix completion we compare against SVT, ADMiRA \cite{LeeB2009}, the spectral matrix completion (SMC) method of Keshavan et al.~\cite{KeshavanOM2009}, and regularized alternating least squares minimization (ALS). We use our own implementation of ALS and SVT for ARMP, while for matrix completion we use the code provided by the respective authors for SVT, ADMiRA and SMC. We report results averaged over $20$ runs. All the methods are implemented in Matlab and use mex files. 

\subsection{Affine Rank Minimization}
We first compare our method against SVT on random instances of $\affr$. We generate  random matrices $X\in\mathbb{R}^{n\times n}$ of different sizes $n$ and fixed rank $k=5$. We then generate $d=6kn$  random affine constraint matrices $A_i, 1\leq i\leq d$ and compute $b={\cal A}(X)$. Figure~\ref{fig:armp} (a) compares the computational time required by $\psvd$ and SVT (in $\log$-scale) for achieving a relative error ($\|{\cal A}(X)-\bm{b}\|_2/\|\bm{b}\|_2$) of $10^{-3}$, and shows that our method requires many fewer iterations and is significantly faster than SVT. 

Next we evaluate our method for the problem of matrix reconstruction from random measurements. As in Recht et al. \cite{RechtFP2007}, we use the MIT logo as the test image for reconstruction. the MIT logo we use is a $38\times 73$ image and has rank four. For reconstruction, we generate random measurement matrices $A_i$ and measure $b_i=Tr(A_iX)$. Figure~\ref{fig:armp} (b) shows that our method incurs significantly smaller reconstruction error than SVT with lower number of iterations. 

\begin{figure*} [ht!]
\begin{center}
    \begin{tabular}[c]{@{\hspace{-0.02\textwidth}}c@{\hspace{-0.02\textwidth}}c@{\hspace{-0.01\textwidth}}c@{}}
      \includegraphics[width=0.45\textwidth, height=4cm]{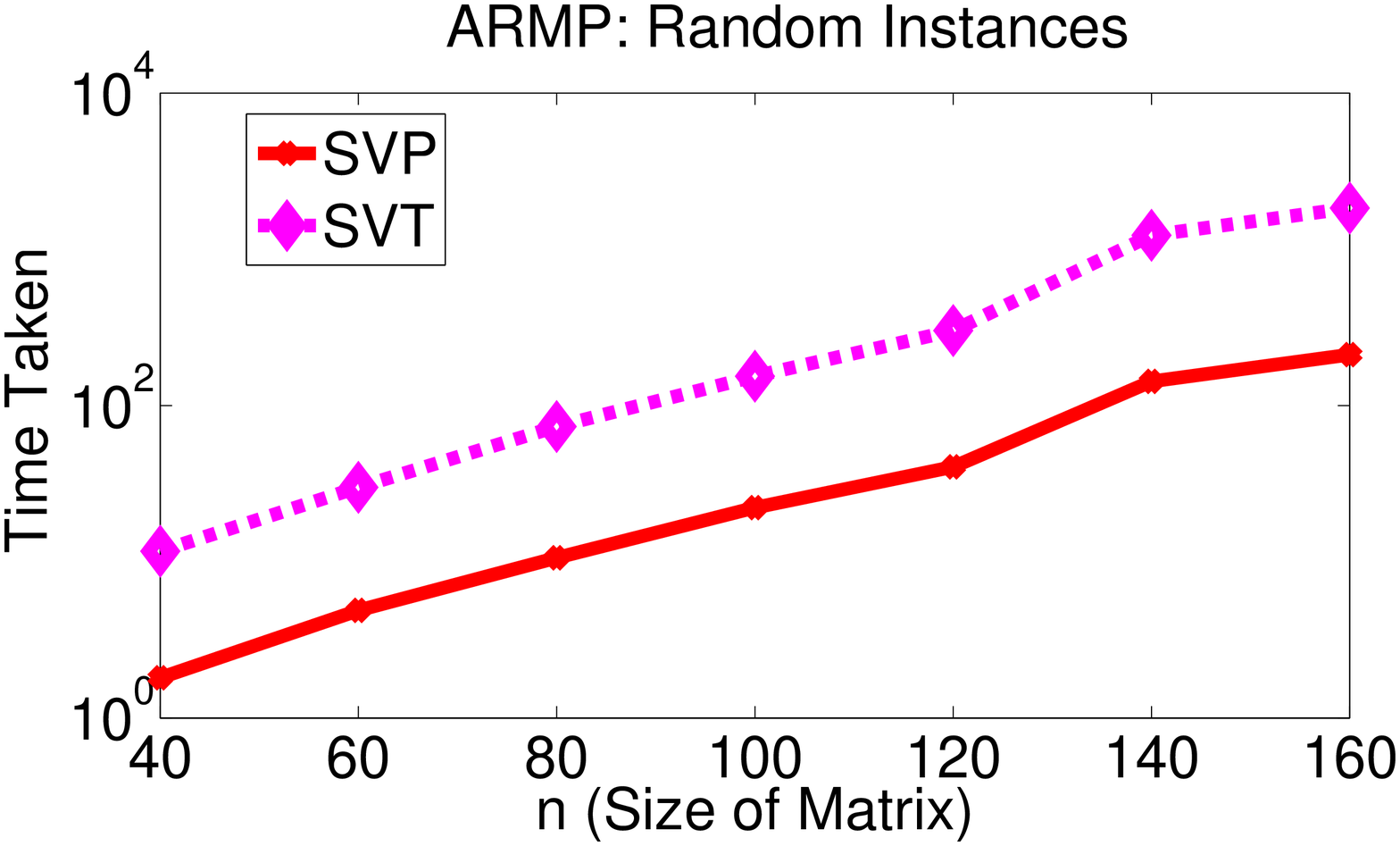}&
      \includegraphics[width=0.45\textwidth, height=4cm]{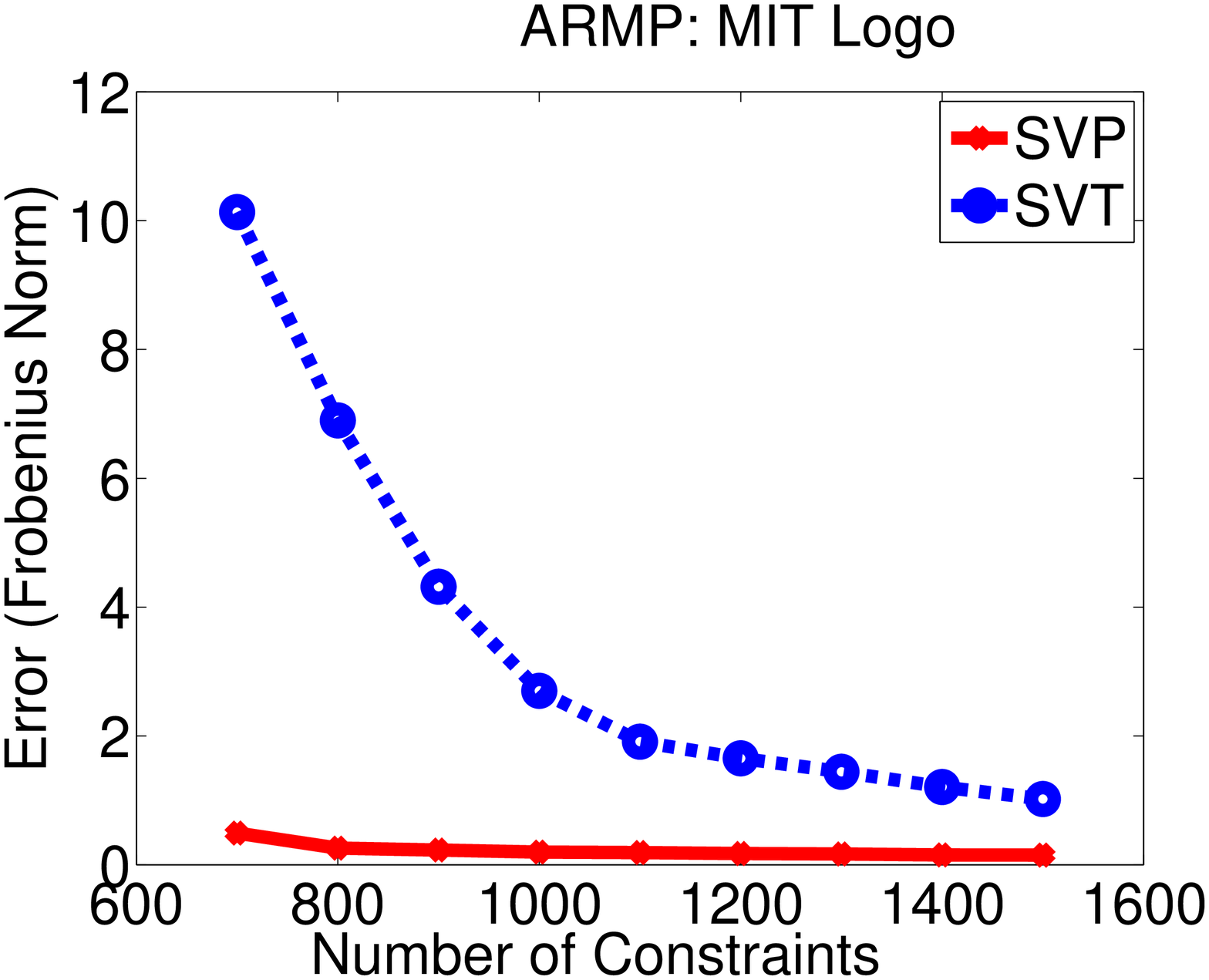} \\
(a)&(b)
     \end{tabular}

    \caption{{\bf (a)}: Time taken by $\psvd$ and SVT for random instances of Affine Rank Minimization Problem (ARMP) with optimal rank $k=5$, {\bf (b)}: Reconstruction error for the MIT logo}
\label{fig:armp}
  \end{center}

\end{figure*}

\subsection{Matrix Completion}
Next, we evaluate our method against various matrix completion methods for random low-rank matrices and uniform samples. We generate a random rank $k$ matrix $X\in\mathbb{R}^{n\times n}$ and generate random Bernoulli samples with probability $p$. Figure~\ref{fig:mctime} compares the time required by various methods (in $\log$-scale) to obtain a root mean square error (RMSE) of $10^{-2}$ for fixed $k=2$. Clearly, our method is substantially faster than the other methods.  Next, we evaluate our method for increasing $k$. Figure~\ref{fig:mctime1} compares the time required by various methods to obtain a root mean square error (RMSE) of $10^{-2}$ for fixed $n=1000$ and increasing $k$. Note that our algorithm scales well with increasing $k$ and is much faster than the other methods. 
\begin{figure*} [ht!]
\begin{center}
     \includegraphics[width=0.6\textwidth, height=5cm]{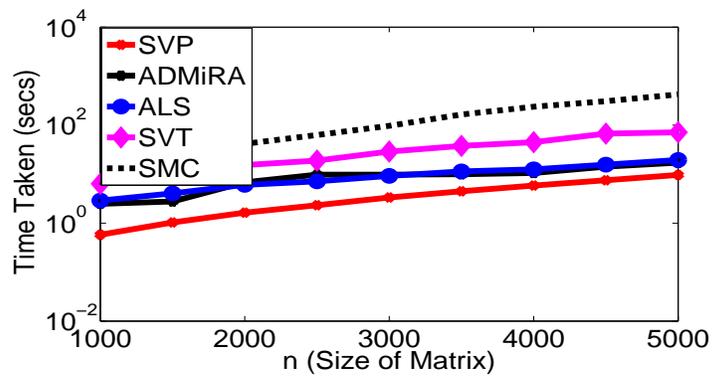}

    \caption{Running time (on log scale) for various methods  for matrix completion problem with sampling density $p=.1$ and optimal rank $k=2$.}
\label{fig:mctime}
  \end{center}

\end{figure*}
\begin{figure*} [ht!]
\begin{center}
     \includegraphics[width=0.6\textwidth, height=5cm]{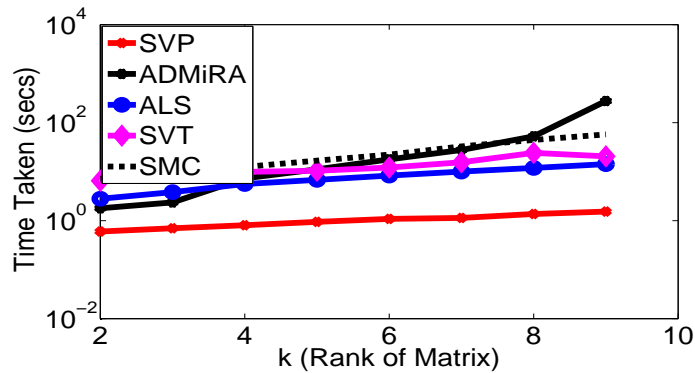}

    \caption{Running time (on log scale) for various methods  for matrix completion problem with sampling density $p=.1$ and  $n=1000$.}
\label{fig:mctime1}
  \end{center}

\end{figure*}

Finally, we study the behavior of our method in presence of noise. For this experiment, we generate random matrices of different size and add approximately $5\%$ Gaussian noise. Figure~\ref{fig:matcomp} plots error incurred and time required by various methods as $n$ increases from $1000$ to $5000$. Note that SVT is particularly sensitive to noise and incurs high RMSE.\\ 

\begin{figure*} [t!]
\begin{center}
\begin{tabular}[c]{@{\hspace{-0.03\textwidth}}c@{\hspace{-0.015\textwidth}}c@{\hspace{-0.015\textwidth}}c@{\hspace{-0.025\textwidth}}c@{}}
     \includegraphics[width=0.45\textwidth, height=4cm]{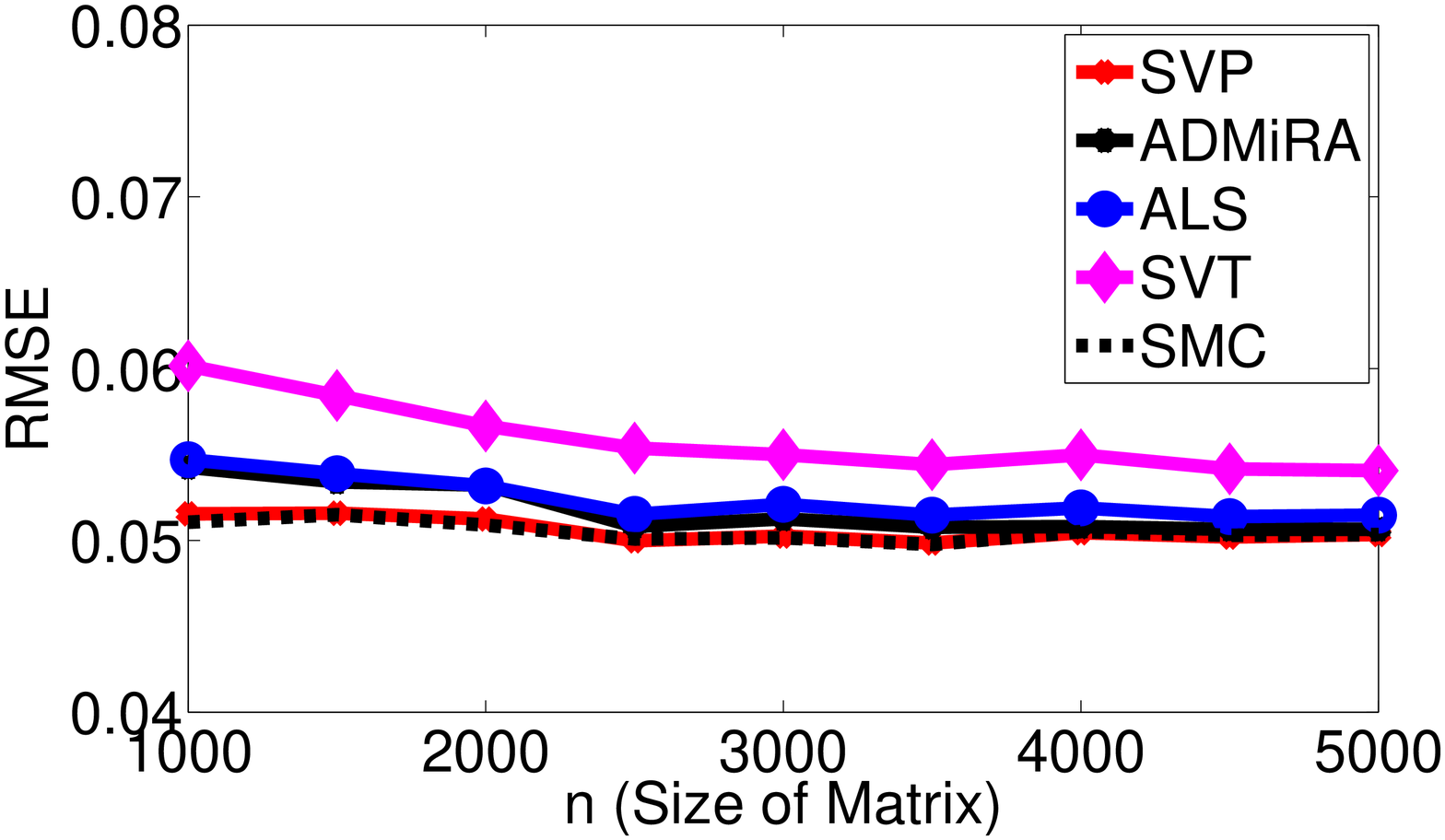}&
      \includegraphics[width=0.45\textwidth, height=4cm]{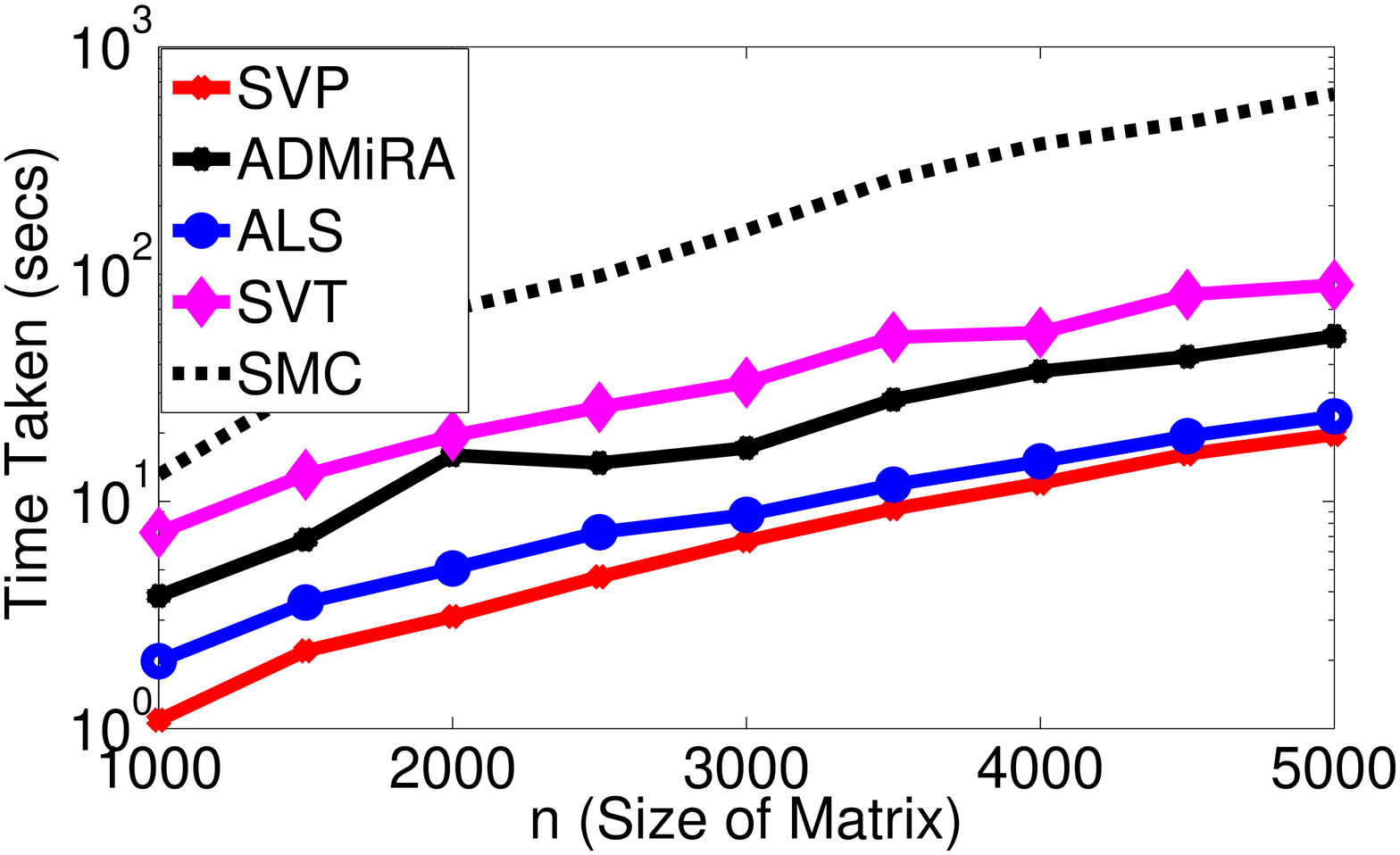} 
     \end{tabular}
   \caption{RMSE and time required by various methods for matrix completion with $p=.1$, $k=2$ and around $10\%$ of the known entries are corrupted. Note that in terms of RMSE values, $\psvd$, ALS and SMC perform about the same.}
\label{fig:matcomp}
  \end{center}
\end{figure*}

{\bf Matrix Completion: Movie-Lens Dataset}\\
Finally, we evaluate our method on the Movie-Lens dataset \cite{Movielens}, which contains 1 million ratings for $3900$ movies by $6040$ users. For $\psvd$ and ALS, we fix the rank of the matrix to be $k=15$. For $\psvd$, we set the step size $\eta_t$ to be $5/ \sqrt{t}$. $\psvd$ incurs RMSE of $1.01$ in $64.85$ seconds, while SVT incurs RMSE of $1.21$ in $1214.78$ seconds. In contrast, ALS achieves RMSE of $0.90$ in $195.34$ seconds. We attribute the relatively poor performance of $\psvd$ and SVT as compared with ALS to the fact that the ratings matrix is not sampled uniformly, thus violating a crucial assumption of both our method and SVT. Similar to Figure~\ref{fig:matcomp} (b), SVT converges much slower than SVP on the Movie-Lens data.


\section{Conclusion and Future Work}
\label{sec:conclusions}
There has been a significant amount of work recently in the area of low-rank approximations. Examples include minimizing rank subject to affine constraints, low-rank matrix completion, low-rank plus sparse decomposition. Most of this research, with the exception of Keshavan et al.~\cite{KeshavanOM2009}, relies on relaxing the rank constraint with trace-norm and gives guarantees for recovering the optimal solution under certain additional assumptions. However, trace-norm relaxation based methods are typically hard to analyze and are relatively expensive in practice.

In this paper, we proposed a simple and natural algorithm based on iterative hard-thresholding. We give a simple analysis of our algorithm for the affine rank minimization problem satisfying the restricted isometry property and give geometric convergence guarantees even in the presence of noise. The intermediate steps in our algorithm are less computationally demanding than those of current state-of-the-art methods. We empirically demonstrate that our method is significantly faster and more robust to both uniformly bounded and outlier noise than most existing methods.

An immediate question arising out of our work is to prove our hypothesis bounding the incoherence of the iterates of $\psvd$ for low-rank matrix completion, or otherwise directly prove Conjecture \ref{conj:svp}. Other directions include application of our methods to other problems of similar flavor such as the low-rank plus sparse matrix decomposition \cite{ChandrasekaranSPW2009}, or other matrix completion type problems like minimum dimensionality embedding using partial distance observations \cite{FazelHB2003} and low-rank kernel learning \cite{MekaJCD2008}.
\section*{Acknowledgments}
This research was supported by NSF grant CCF-0431257, NSF grant CCF-0916309 and NSF grant CCF-0728879. We thank the reviewers of NIPS 2009 for their useful comments and for pointing out a mistake in an earlier proof of Theorem \ref{mcrip}.

{\footnotesize
\bibliographystyle{svpurl}
\bibliography{references}
}
\end{document}